\newcommand{\set}[1]{\mathopen{}\left\{#1\right\}}
\newcommand{\spr}[1]{\mathopen{}\left[#1\right]}
\newcommand{\pr}[1]{\mathopen{}\left(#1\right)}
\newcommand{\abs}[1]{\mathopen{}\left|#1\right|}
\newcommand{\enset}[2]{\set{{#1},\ldots,{#2}}}
\newcommand{\real}{\mathbb{R}}
\newcommand{\np}{\textbf{NP}}
\newcommand{\half}{\frac{1}{2}}
\newcommand{\ifam}[1]{\mathcal{#1}}
\newcommand{\negbord}[1]{\mathrm{negbord}\pr{#1}}
\newcommand{\funcdef}[3]{{#1}:{#2} \to {#3}}
\newcommand{\define}{\Leftarrow}
\newcommand{\cdf}{cdf}
\newcommand{\mean}[2]{\mathrm{E}_{#1}\spr{#2}}
\newcommand{\kl}[2]{\mathrm{KL}\pr{#1 \| #2}}
\newcommand{\ent}[1]{H\pr{#1}}
\newcommand{\pme}{p^{*}}
\newcommand{\pind}{p_{ind}}
\newcommand{\dtname}[1]{\emph{#1}}
\newcommand{\dispfunc}[2]{%
  \ensuremath{%
  \ifthenelse{\equal{\noexpand#2}{}}%
    {\mathrm{#1}}%
    {\mathrm{#1}\pr{#2}}}}
\newcommand{\condfunc}[3]{%
  \ensuremath{%
  \ifthenelse{\equal{\noexpand#3}{}}{%
    \ifthenelse{\equal{\noexpand#2}{}}%
      {{#1}}%
      {{#1}\pr{#2}}}%
    {{#1}\pr{#2; #3}}}}
\newcommand{\rank}[2][]{\condfunc{r}{#2}{#1}}
\newcommand{\rankI}{\rank[\ifam{I}]{G}}
\newcommand{\rankC}{\rank[\ifam{C}]{G}}
\newcommand{\rankA}{\rank[\ifam{A}]{G}}
\newcommand{\rankT}{\rank[\ifam{T}^*]{G}}
\newcommand{\rankF}{\rank[\ifam{F}^*]{G}}
\newcommand{\nrank}[2][]{\condfunc{nr}{#2}{#1}}
\newcommand{\nrankI}{\nrank[\ifam{I}]{G}}
\newcommand{\nrankC}{\nrank[\ifam{C}]{G}}
\newcommand{\nrankA}{\nrank[\ifam{A}]{G}}
\newcommand{\nrankT}{\nrank[\ifam{T}^*]{G}}
\newcommand{\nrankF}{\nrank[\ifam{F}^*]{G}}
\newcommand{\brin}[1]{r_b\pr{#1}}
\newcommand{\colstr}[1]{r_{cs}\pr{#1}}
\newtheorem{theorem}{Theorem}
\newtheorem{lemma}{Lemma}
\newtheorem{corollary}[theorem]{Corollary}
\newtheorem{example}[theorem]{Example}
\begin{document}
\title{Maximum Entropy Based Significance of Itemsets}
\author{Nikolaj Tatti\\
HIIT Basic Research Unit, Department of Computer Science\\
Helsinki University of Technology, Helsinki, Finland\\
ntatti@cc.hut.fi}

\maketitle
\thispagestyle{empty}
\begin{abstract}
We consider the problem of defining the significance of an itemset.  We say
that the itemset is significant if we are surprised by its frequency when
compared to the frequencies of its sub-itemsets. In other words, we estimate the
frequency of the itemset from the frequencies of its sub-itemsets and compute
the deviation between the real value and the estimate. For the estimation we
use Maximum Entropy and for measuring the deviation we use Kullback-Leibler
divergence.

A major advantage compared to the previous methods is that we are able to use
richer models whereas the previous approaches only measure the deviation from
the independence model.

We show that our measure of significance goes to zero for derivable itemsets
and that we can use the rank as a statistical test. Our empirical results
demonstrate that for our real datasets the independence assumption is too
strong but applying more flexible models leads to good results.
\end{abstract}

\section{Introduction}
How significant is a given itemset? Itemsets are popular and well-studied
patterns in binary data mining. The major drawback is that, given a dataset,
there are exponential number of itemsets. Hence, we need to rank itemsets in
order to prune the uninteresting ones.

Traditionally, the frequency of an itemset is used as a rank measure. The
higher the frequency, the more significant is the itemset. Frequency has many
virtues: It is easy to interpret and because of its property of
anti-monotonicity there exist efficient algorithms for finding all frequent
itemsets~\cite{agrawal93mining,agrawal96apriori}. There are, however, major
drawbacks. First, a frequent itemset may be insignificant: An itemset $AB$ may
be frequent just because itemsets $A$ and $B$ are frequent.  Second, an
infrequent itemset may be significant: If itemsets $A$ and $B$ are frequent,
the infrequency of $AB$ is interesting information.

Alternative methods for ranking itemsets are suggested
in~\cite{aggarwal98new,brin97beyond,dumouchel01empirical}. These methods are
discussed in more detail in Section~\ref{sec:related}. A common feature to
these methods is that they compare the frequency of an itemset to an estimate
obtained from the independence model. That is, the more the itemset deviates
from the independence model, the more surprising, and thus the more
significant, the itemset is.

Our proposal for ranking itemsets resembles the aforementioned approaches.  We
estimate the frequency of a given itemset from the frequencies of some selected
sub-itemsets. Namely, we use Maximum Entropy for the estimation.  This approach
is more flexible than the independence model, since the independence model uses
only the margins (the frequencies of itemsets of size $1$) for prediction
whereas our approach allows to use the information available from the itemsets
of larger size. While our ranking method is based on well-known tools, no
similar framework has been suggested previously.

Unlike the frequency, our measure is not decreasing with respect to set inclusion.
Hence we cannot mine significant itemsets in a level-wise fashion. However,
it turns out that in some cases we can prune a large set of uninteresting itemsets
(w.r.t. the measure). Namely, if the itemset is derivable~\cite{calders02mining},
then the measure is equal to $0$. We also point out that can be used as a
statistical test, thus providing a clear interpretation for the measure.

The rest of the paper is organized as follows: Preliminaries are given in
Section~\ref{sec:prel}. The definition and the properties of the measure are
given in Section~\ref{sec:theory}. We present related work in
Section~\ref{sec:related}. Section~\ref{sec:experiments} is devoted to
experiments and finally we provide conclusions in
Section~\ref{sec:conclusions}.

\section{Preliminaries and Notation}
\label{sec:prel}
In this section we review briefly theory of itemsets and also introduce some
notation that will be used later on.

A \emph{binary dataset} $D$ is a collection of $M$ binary vectors,
\emph{transactions}, having length $K$. Such dataset can be naturally
represented as a matrix of size $M \times K$. We denote the number of
transactions by $\abs{D} = M$.  To each column of the matrix we assign an
\emph{attribute} $a_i$. Let $A = \enset{a_1}{a_K}$ be the collection of all
attributes. An itemset $X \subseteq A$ is a set of attributes.

We say that a transaction (binary vector) $\omega$ \emph{covers} an itemset $X$
if $a_i \in X$ implies $\omega_i = 1$.  Given a dataset $D$, a \emph{frequency}
of an itemset $X$ is a proportion of the transaction in $D$ covering $X$.  Note
that if an itemset $Y$ is a subset of $X$, then the frequency of $Y$ is larger
than or equal to the frequency of $X$. In other words, frequency is decreasing
with respect to set inclusion.

A sample space $\Omega$ is the set of all binary vectors of length $K$.
We take a simplistic approach in defining distributions: A distribution
$\funcdef{p}{\Omega}{\spr{0,1}}$ is a function from a sample space $\Omega$ to
a real number between $0$ and $1$ such that $\sum_{\omega \in \Omega} p(\omega)
= 1$. Given an itemset $X$, a frequency of $X$ calculated from a distribution $p$
is the probability of binary vector covering $X$. We denote this by
\[
p(X = 1) = p(\omega \text{ covers } X).
\]

A family of itemsets $\ifam{F}$ is called \emph{anti-monotonic} or
\emph{downward closed} if every subset of each member of $\ifam{F}$ is also a
member of $\ifam{F}$. Note that a collection of $\sigma$-frequent itemsets,
that is, itemsets having frequency larger than some given threshold $\sigma$,
is downward closed. We are interested in three particular families: 

\begin{itemize}
\item $\ifam{I}$, the family containing only itemsets of size 1.
\item $\ifam{C}$, the family containing itemsets of size 1 and 2.
\item $\ifam{A}$, the family containing all itemsets.
\end{itemize}

A \emph{negative border} $\negbord{\ifam{F}}$ of the downward closed family
$\ifam{F}$ is the set of itemsets just above $\ifam{F}$. In other words, $X
\notin \ifam{F}$ is member of $\negbord{\ifam{F}}$ if there is no proper subset
$Y \subset X$ such that $Y \notin \ifam{F}$.

Given a dataset $D$, we say that an itemset $X$ is \emph{derivable} if by
knowing the frequencies (calculated from $D$) of each proper subset of $X$ we
can deduce the frequency of $X$. For example, if some subset of $X$ has a
frequency $0$, then we know that $X$ must also have frequency $0$. Thus, in
this case, $X$ is derivable. An itemset that is not derivable is called
\emph{non-derivable}. A family of all non-derivable itemsets is
downward closed~\cite{calders02mining}.

\section{Maximum Entropy Ranking}
\label{sec:theory}
In this section we introduce our ranking method and discuss its theoretical
properties.  The fundamental idea behind our approach is to measure how
surprising an itemset is compared to its subsets. In other words, we estimate
the itemset frequency by using the frequencies of its subsets and compare how
close is our estimation to the actual value. The estimation is done using
Maximum Entropy method and the comparison is done using Kullback-Leibler
divergence.

\subsection{Definition}

Let $D$ be a binary dataset and let $\enset{a_1}{a_K}$ be its attributes. The
number of columns in $D$ is $K$. Assume that we are given $G$, an itemset we
wish to rank. We define a projected dataset $D_G$ by keeping only the
attributes included in $G$.

Let $\Omega_G = \set{0,1}^{\abs{G}}$ be a space of binary vectors of length
$\abs{G}$. We define an \emph{empirical distribution}
$\funcdef{q_G}{\Omega_G}{\spr{0,1}}$ to be
\[
	q_G(\omega) = \frac{\text{Number of samples in $D_G$ equal to $\omega$}}{\abs{D_G}}.
\]

Our goal is to compare the distribution $q_G$ to a distribution obtained by
using Maximum Entropy~\cite{kullback68information}, a method that we will
describe next.

Assume now that we are given a family of itemsets $\ifam{F} \subseteq \ifam{A}$
and let $\theta_X$ be the frequency of $X \in \ifam{F}$ calculated from $D$.
Our next step is to define an approximative distribution using only the
itemsets in $\ifam{F}$. In defining $q_G$ we projected out the attributes
outside $G$. Similarly, we are only interested in subsets of $G$. Hence we
define a \emph{projected family} $\ifam{F}_G$ to be
\[
\ifam{F}_G = \set{X \in \ifam{F} \mid X \subset G, X \neq G, X \neq \emptyset}.
\]
Note that $\ifam{F}_G$ may contain $2^{\abs{G}} - 2$ itemsets, at maximum.
This is the case if $\ifam{F} = \ifam{A}$.

We say that a distribution $\funcdef{p}{\Omega_G}{\spr{0,1}}$ \emph{satisfies
the itemsets} $\ifam{F}_G$ if for each itemset $X \in \ifam{F}_G$ and its
frequency $\theta_X$ we have
\[
	p(X = 1) = \theta_X.
\]
Let $\mathbb{P}$ be the set of all distributions satisfying the itemsets
$\ifam{F}_G$.  This set is not empty since $q_G \in \mathbb{P}$. We select the
distribution from $\mathbb{P}$ maximizing the entropy
\[
	\ent{p} = -\sum_{\omega \in \Omega_G} p(\omega)\log p(\omega).
\]
We denote this distribution by $\pme$. Note that $\pme$ depends on $G$,
$\ifam{F}$, and $\theta$ but we have omitted these variables from the notation
for the sake of clarity.

We define the rank measure $\rank[\ifam{F}, D]{G}$ to be the divergence between
$q_G$ and $\pme$, that is,
\[
	\rank[\ifam{F}, D]{G} = \sum_{\omega \in \Omega_G} q_G(\omega)\log \frac{q_G(\omega)}{\pme(\omega)}.
\]
We omit $D$ from the notation when the dataset is clear from the context.
\begin{example}
Assume the simplest case where $G = a$ is an itemset of size $1$. Let $\theta_G$
be the frequency of $G$. Note that $\ifam{F}_G = \emptyset$, hence there are
no constraints on selecting $\pme$. This means that $\pme$ is the uniform
distribution, that is, $\pme(0) = \pme(1) = 1/2$. In this case the measure is
\[
	\rank[\ifam{F}]{a} = (1 - \theta_G)\log\pr{2(1 - \theta_G)} + \theta_G\log\pr{2\theta_G}
\]
obtaining its minimum when $\theta_G = 1/2$ and is at its maximum when
$\theta_G = 0$ or $\theta_G = 1$.
\end{example}

We are mainly interested in three kinds of measures: The first is {\rankI} in
which $\ifam{I}$ is the family of itemsets of size $1$. In this case the
Maximum Entropy distribution is equal to the independence model.

The second case is {\rankC}, where $\ifam{C}$ contains the itemsets of size $1$
and $2$. We can show that there exists a matrix $B$ such that for the non-zero
entries of $\pme$ we have
\[
	\pme(\omega) \propto \exp\pr{ \omega^T B \omega}.
\]
Hence, {\rankC} can be seen as the measure of the deviation from the discrete
Gaussian model.

Our third type of measure is {\rankA} in which $\pme$ is predicted from all
the proper sub-itemsets of $G$. In this case we can prove that for a certain
set of real numbers $r_i$ we have for the non-zero entries of $\pme$
\[
	\pme(\omega) \propto \prod_{X_i \in \ifam{A}_G}  \exp\pr{ r_i I\pr{\omega \text{ covers } X_i}},
\]
where $I$ is the indicator function.
We discuss the evaluation of our approach in Section~\ref{sec:computing}.

\subsection{Properties}
In this section we discuss various properties of $\rank{G}$.  We will first
point the connection between $\rank{G}$ and derivable itemsets and then discuss
the use of $\rank{G}$ as a statistical test.

\begin{theorem}
\label{thr:deriv1}
Let $G$ be a derivable itemset. Then \[\rank[\ifam{A}]{G} = 0.\]
\end{theorem}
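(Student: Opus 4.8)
The plan is to show that when $G$ is derivable, the Maximum Entropy distribution $\pme$ built from \emph{all} proper subsets of $G$ actually coincides with the empirical distribution $q_G$, which forces the Kullback-Leibler divergence $\rank[\ifam{A}]{G}$ to vanish. The key observation is that derivability is a statement about which frequencies are determined: by definition, knowing the frequencies $\theta_X$ of every proper subset $X \subsetneq G$ uniquely determines the frequency $\theta_G$ of $G$ itself. I would first recall that the frequency $p(X=1)$ of any itemset is a linear functional of the underlying distribution $p$ on $\Omega_G$, so the constraint set $\mathbb{P}$ (distributions satisfying all the subset frequencies in $\ifam{A}_G$) is a polytope, and $q_G$ is one of its members.

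The heart of the argument is to translate ``derivable'' into ``the polytope $\mathbb{P}$ is a single point.'' For $\ifam{F}=\ifam{A}$, the family $\ifam{A}_G$ contains every nonempty proper subset of $G$. The frequencies of all subsets of $G$ are related to the point masses $p(\omega)$ by the inclusion-exclusion / Möbius transform, which is an invertible linear change of coordinates on $\real^{\Omega_G}$. Under this transform, fixing $p(X=1)=\theta_X$ for every $X \in \ifam{A}_G$ fixes all but one of the degrees of freedom; the remaining free coordinate corresponds exactly to $\theta_G = p(G=1)$, the frequency of the full set $G$. Thus I would argue that $\mathbb{P}$ is (at most) a one-parameter family parametrized by the value of $p(G=1)$.

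Now I invoke derivability: since $G$ is derivable, the value $\theta_G$ is forced by the subset frequencies, so the only admissible value of $p(G=1)$ for a genuine probability distribution in $\mathbb{P}$ is the empirical one. This collapses $\mathbb{P}$ to the single point $q_G$. Since $\pme$ is by definition the \emph{maximizer} of entropy over $\mathbb{P}$, and $\mathbb{P} = \set{q_G}$, we conclude $\pme = q_G$. Substituting into the divergence,
\[
\rank[\ifam{A}]{G} = \sum_{\omega \in \Omega_G} q_G(\omega)\log\frac{q_G(\omega)}{q_G(\omega)} = 0,
\]
which is the claim.

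The main obstacle I anticipate is making the dimension-counting step rigorous, namely showing that fixing the frequencies of all proper subsets leaves \emph{exactly} the coordinate $p(G=1)$ free and nothing else. This requires the Möbius inversion between the $2^{\abs{G}}$ point masses and the $2^{\abs{G}}$ subset-frequencies to be carefully unwound, and care that the derivability condition is used in the form that the \emph{entire} distribution (not just $\theta_G$) is pinned down. A subtlety is that derivability as defined only guarantees $\theta_G$ is determined; I would need to confirm that once $\theta_G$ is also fixed, the inclusion-exclusion relations determine every $p(\omega)$, so that the probability constraints $p(\omega)\ge 0$ and $\sum p(\omega)=1$ are automatically consistent with the unique solution being $q_G$ itself.
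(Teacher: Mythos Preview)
Your proposal is correct and follows essentially the same route as the paper: you argue that the constraints $\ifam{A}_G$ leave exactly one free parameter, namely $p(G=1)$, and that derivability pins this parameter down, collapsing $\mathbb{P}$ to $\set{q_G}$ and forcing $\pme = q_G$. The paper's proof is the same idea stated more tersely, simply asserting the one-to-one correspondence between distributions in $\mathbb{P}$ and the value $p(G=1)$ without spelling out the M\"obius inversion that you (rightly) identify as the underlying mechanism.
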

\begin{proof}
We can argue that if we know the frequencies of all sub-itemsets of $G$, we
can derive the distribution $q_G$ and vice versa. This implies that there is
one-to-one correspondence between the distribution $p \in \mathbb{P}$
satisfying the itemsets $\ifam{A}_G$ and the frequency $p(G = 1)$. Since we can
derive the frequency of $G$ from $\ifam{A}_G$, it follows that $\mathbb{P} =
\set{q_G}$, and hence $\pme = q_G$.
\end{proof}

We can reformulate the previous theorem in a stronger form by pointing out that
we need to know only non-derivable itemsets.

\begin{theorem}
\label{thr:deriv2}
Let $\ifam{F}$ be a family of all non-derivable itemsets. Let $G$ be outside
of $\ifam{F}$. Then $\rank[\ifam{F}]{G} = 0$.
\end{theorem}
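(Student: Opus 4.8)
The plan is to show that the constraints coming from the non-derivable proper subsets of $G$ already pin down a unique distribution, namely $q_G$, so that $\pme = q_G$ and the divergence vanishes. This mirrors Theorem~\ref{thr:deriv1}, but now the subtlety is that $\ifam{F}_G$ contains fewer itemsets than $\ifam{A}_G$, so a priori the set $\mathbb{P}$ of distributions satisfying $\ifam{F}_G$ could be strictly larger than $\set{q_G}$. The core claim I would establish is that every $p \in \mathbb{P}$ in fact satisfies all of $\ifam{A}_G$, after which the argument of Theorem~\ref{thr:deriv1} applies verbatim.

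First I would recall that derivability is a distribution-free property: the inclusion--exclusion bounds of~\cite{calders02mining} on $p(X = 1)$ expressed in terms of $\set{p(Y = 1) \mid Y \subsetneq X}$ hold for \emph{any} distribution $p$ on $\Omega_G$, not merely for the empirical one. Consequently, if an itemset $X$ is derivable with respect to $D$ --- meaning these bounds coincide at the values $\theta_Y$ --- then any distribution $p$ matching $\theta_Y$ on every proper subset $Y$ of $X$ is forced to satisfy $p(X = 1) = \theta_X$.

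With that observation in hand, the second step is an induction on the size of the subsets of $G$. Let $p \in \mathbb{P}$. The singletons are non-derivable, hence lie in $\ifam{F}_G$, so $p$ matches their frequencies; this is the base case. For the inductive step consider a non-empty $Z \subsetneq G$ and assume $p(Y = 1) = \theta_Y$ for all $Y \subsetneq Z$. If $Z$ is non-derivable then $Z \in \ifam{F}_G$ and the constraint gives $p(Z = 1) = \theta_Z$ directly; if $Z$ is derivable, the distribution-free bounds applied to the already-matched values on the proper subsets of $Z$ force $p(Z = 1) = \theta_Z$. Either way $p$ matches $\theta_Z$, so by induction $p$ satisfies every itemset in $\ifam{A}_G$.

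Finally, since $G$ is derivable --- which is exactly what $G \notin \ifam{F}$ means, as $\ifam{F}$ is the downward-closed family of all non-derivable itemsets --- the same distribution-free bound applied to the now-matched proper subsets of $G$ forces $p(G = 1) = \theta_G$ as well. Thus $p$ agrees with $q_G$ on the frequencies of all subsets of $G$, and because these frequencies determine the joint distribution on $\Omega_G$ uniquely (the one-to-one correspondence invoked in Theorem~\ref{thr:deriv1}), we conclude $p = q_G$. Hence $\mathbb{P} = \set{q_G}$, so $\pme = q_G$ and $\rank[\ifam{F}]{G} = 0$. The main obstacle I anticipate is the first step: making precise and justifying that the derivability bounds are genuinely distribution-independent, so that $D$-derivability transfers to every member of $\mathbb{P}$ rather than only to the empirical distribution.
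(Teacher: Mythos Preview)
Your proposal is correct and follows essentially the same approach as the paper: the paper's proof is the one-liner ``since all unknown sub-itemsets of $G$ are derivable from $\ifam{F}_G$, the argument of Theorem~\ref{thr:deriv1} holds,'' and you have unpacked precisely this into the distribution-free nature of the derivability bounds and the induction on subset size. Your version makes explicit the two points the paper leaves tacit --- that the inclusion--exclusion bounds constrain every distribution, not just $q_G$, and that the downward-closedness of non-derivable itemsets is what drives the induction --- but the underlying argument is the same.
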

\begin{proof}
Since all unknown sub-itemsets of $G$ are derivable from $\ifam{F}_G$, the argument
of Theorem~\ref{thr:deriv1} holds. 
\end{proof}

The following theorem provides the interpretation to the value of $\rank{G}$ and
points out that we can use $\rank{G}$ as a statistical test.

\begin{theorem}
\label{thr:x2simple}
Let $G$ be a non-derivable itemset. Under the 0-hypothesis that $G$ is
distributed according to $\pme$, the quantity $2\abs{D}\rank[\ifam{A}]{G}$ is
distributed asymptotically as $\chi^2$ with degree $1$ of freedom.
\end{theorem}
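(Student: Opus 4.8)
The plan is to recognize $2\abs{D}\rank[\ifam{A}]{G}$ as a classical log-likelihood ratio (deviance) statistic and then invoke the asymptotic theory of such statistics, i.e.\ Wilks' theorem. First I would rewrite the measure in terms of counts. Writing $n_\omega$ for the number of transactions in $D_G$ equal to $\omega$, so that $q_G(\omega) = n_\omega / \abs{D}$, and recalling that $\rank[\ifam{A}]{G} = \kl{q_G}{\pme}$, we get $2\abs{D}\rank[\ifam{A}]{G} = 2\sum_\omega n_\omega \log\frac{n_\omega}{\abs{D}\pme(\omega)}$. This is exactly the likelihood-ratio statistic comparing the observed counts $n_\omega$ against the counts $\abs{D}\pme(\omega)$ expected under $\pme$.

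Second, I would cast $\pme$ as the maximum-likelihood fit of a log-linear (exponential-family) model, so that the machinery of nested models applies. By the duality between Maximum Entropy and maximum likelihood for moment constraints, $\pme$ is simultaneously the entropy maximizer over $\mathbb{P}$ and the MLE, within the exponential family whose sufficient statistics are the indicators $I(\omega\text{ covers }X)$ for $X\in\ifam{A}_G$, of the empirical frequencies $\theta_X$. This exponential family is the null (restricted) model $\ifam{M}$; the alternative is the saturated model of all distributions on $\Omega_G$, whose MLE is $q_G$ itself. Hence $2\abs{D}\rank[\ifam{A}]{G}$ is precisely the likelihood-ratio statistic of $\ifam{M}$ against the saturated model, and Wilks' theorem yields convergence in distribution to $\chi^2$ with $d$ degrees of freedom, where $d$ is the difference of the dimensions of the two models.

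Third, I would compute $d = 1$. The saturated model has $\abs{\Omega_G} - 1 = 2^{\abs{G}}-1$ free parameters. The family $\ifam{A}_G$ of proper, non-empty subsets of $G$ contains $2^{\abs{G}}-2$ itemsets, and their indicators together with the constant function are linearly independent (the zeta transform over the subset lattice is invertible), so $\ifam{M}$ has exactly $2^{\abs{G}}-2$ parameters, giving $d = (2^{\abs{G}}-1) - (2^{\abs{G}}-2) = 1$. The hypothesis of non-derivability enters precisely here: it guarantees that the single missing coordinate---the top-order interaction associated with $G$, equivalently the frequency $p(G = 1)$---is a genuine free parameter. Indeed, the argument of Theorem~\ref{thr:deriv1} shows that $\mathbb{P}$ is parametrized bijectively by $p(G = 1)$, so $\ifam{M}$ is cut out from the saturated model by exactly one identifiable constraint; were $G$ derivable, that parameter would collapse, $\pme$ would equal $q_G$, and the statistic would vanish identically.

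The main obstacle is verifying the regularity conditions behind Wilks' theorem rather than the algebra. I need the true distribution $\pme$ to sit in the interior of the parameter space---in particular, positivity of its non-zero entries must make the exponential family regular with non-singular Fisher information---and I must confirm that the added parameter is locally identifiable (non-degenerate) at $\pme$, which is exactly what non-derivability supplies. Handling the case where $\pme$ has zero entries, so that the effective sample space is a strict subset of $\Omega_G$, requires some care; but under the stated null the observations remain in the support of $\pme$, and the same degree-of-freedom count then applies on that restricted space.
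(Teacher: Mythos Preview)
Your approach via Wilks' theorem is correct and is essentially the high-level statement that the paper's appendix re-proves from scratch. The paper does not invoke Wilks as a black box; instead it parameterizes the constraint set $\mathbb{P}$ directly by the vector $\theta$ of frequencies of the \emph{missing} itemsets $\ifam{H}$ (here the single number $p(G=1)$), Taylor-expands $\kl{\theta_N}{\theta^*}$ to second order around the Maximum-Entropy point $\theta^*$, applies the CLT to $\sqrt{N}(\theta_N-\theta^*)$, and identifies the expected Hessian with $-\Sigma^{-1}$ to obtain $X^T\Sigma^{-1}X\sim\chi^2_M$. Your nested-log-linear framing against the saturated model is the dual picture and buys a one-line citation; the paper's explicit computation buys control over exactly which regularity conditions are being used and why they hold here.

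One clarification on where non-derivability enters. You use it to secure the dimension count and then leave ``$\pme$ has zero entries'' as a separate boundary case to be patched on a restricted support. The paper shows this boundary case \emph{cannot occur}: if $\pme(\omega)=0$ for some $\omega$, then by Csisz\'ar's theorem every $p\in\mathbb{P}$ has $p(\omega)=0$; the inclusion--exclusion identity for that $\omega$ then expresses $p(G=1)$ as an alternating sum of proper sub-itemset frequencies, making $G$ derivable. Hence non-derivability is precisely the hypothesis that places $\pme$ in the interior of the probability simplex, which is the regularity condition Wilks (or the paper's Taylor expansion) requires. There is no residual boundary case to handle.
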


Theorem~\ref{thr:x2simple} is a special case of the following more general
statement.

\begin{theorem}
\label{thr:x2generic}
Let $G$ be a non-derivable itemset and let $\ifam{F}$ be an itemset family.
Define $\ifam{H}$ to be
\[
	\ifam{H} = \set{X \in \ifam{A} \mid X \subseteq G, X \neq \emptyset, X \notin \ifam{F}_G},
\]
that is, $\ifam{H}$ is a family of sub-itemsets of $G$ not belonging to
$\ifam{F}_G$. Under the 0-hypothesis that the itemsets in $\ifam{H}$ are
distributed according to $\pme$, the quantity $2\abs{D}\rank[\ifam{F}]{G}$ is
distributed asymptotically as $\chi^2$ with degree $\abs{\ifam{H}} = 2^{\abs{G}}
- 1 - \abs{\ifam{F}_G}$ of freedom.
\end{theorem}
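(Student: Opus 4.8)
The plan is to recognise $2\abs{D}\rank[\ifam{F}]{G}$ as a log-likelihood ratio statistic comparing two nested multinomial models on the projected sample space $\Omega_G$, and then to invoke the classical theorem of Wilks. The first step is to record the standard duality between Maximum Entropy and Maximum Likelihood~\cite{kullback68information}. The features $I(\omega\text{ covers }X)$ for $X \in \ifam{F}_G$ generate a log-linear (exponential family) model $\mathcal{E}$, and the member of $\mathcal{E}$ whose feature expectations match the empirical frequencies $\theta_X$ is simultaneously (i) the Maximum Entropy distribution $\pme$ satisfying the itemsets $\ifam{F}_G$ and (ii) the Maximum Likelihood estimate within $\mathcal{E}$. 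I would state and use this identification at the outset; it holds because in a minimal exponential family the likelihood equations are exactly the moment-matching constraints $p(X=1)=\theta_X$.

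Next I would write the log-likelihood of a distribution $p$ on $\Omega_G$ given the $\abs{D}$ projected transactions as $\ell(p)=\abs{D}\sum_{\omega} q_G(\omega)\log p(\omega)$. Over the saturated model of all distributions the maximiser is $q_G$ itself (Gibbs' inequality), while over $\mathcal{E}$ the maximiser is $\pme$ by the duality above. Hence
\[
2\bigl(\ell(q_G)-\ell(\pme)\bigr)=2\abs{D}\sum_{\omega} q_G(\omega)\log\frac{q_G(\omega)}{\pme(\omega)}=2\abs{D}\rank[\ifam{F}]{G},
\]
so the quantity of interest is precisely the likelihood ratio statistic for testing the submodel $\mathcal{E}$ against the saturated model. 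With this identification, Wilks' theorem gives the asymptotic $\chi^2$ limit under the null hypothesis that the data are drawn from $\pme \in \mathcal{E}$, with degrees of freedom equal to the difference of the dimensions of the two models. The saturated model has dimension $2^{\abs{G}}-1$ (the simplex on $\Omega_G$), while $\mathcal{E}$ has dimension $\abs{\ifam{F}_G}$, since the monomial features $\prod_{i\in X}\omega_i$ for $X \in \ifam{F}_G$ are linearly independent as functions on $\Omega_G$ and the family is therefore minimal. Subtracting, the degrees of freedom are $\bigl(2^{\abs{G}}-1\bigr)-\abs{\ifam{F}_G}=\abs{\ifam{H}}$, which matches the claim, since $\ifam{H}$ simply enumerates the nonempty subsets of $G$ (including $G$) whose frequencies are left free by $\ifam{F}_G$. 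The special case $\ifam{F}=\ifam{A}$ leaves only $G$ itself free, giving one degree of freedom and recovering Theorem~\ref{thr:x2simple}.

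The delicate part is checking the regularity hypotheses of Wilks' theorem, and this is exactly where the non-derivability of $G$ is needed. What must be verified is that $\pme$ is an interior point of the simplex, so that the mean-to-natural-parameter map of the exponential family is a local diffeomorphism and the empirical estimate $q_G$ is asymptotically normal about $\pme$. If $G$ were derivable, some sub-itemset frequency would pin a cell of $\Omega_G$ to the boundary, the models would collapse (indeed $\rank[\ifam{A}]{G}=0$ by Theorem~\ref{thr:deriv1}), and the $\chi^2$ approximation would fail; assuming $G$ non-derivable rules this degeneracy out and keeps the relevant frequencies away from $0$ and $1$. Granting the interior condition, the remaining smoothness and rank requirements are automatic for a minimal log-linear model. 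If a self-contained argument is preferred to citing Wilks directly, I would instead expand $\kl{q_G}{\pme}$ to second order about $\pme$; the leading term is the Pearson quadratic form $\sum_{\omega}\bigl(q_G(\omega)-\pme(\omega)\bigr)^2/\pme(\omega)$, which, after projecting the normalised cell fluctuations onto the $\abs{\ifam{H}}$-dimensional orthogonal complement of the constraint directions, is a sum of $\abs{\ifam{H}}$ asymptotically independent squared standard normals and hence $\chi^2$ with $\abs{\ifam{H}}$ degrees of freedom.
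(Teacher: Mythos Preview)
Your approach is correct and takes a more structural route than the paper. You recognise $2\abs{D}\rank[\ifam{F}]{G}$ as the likelihood-ratio statistic for the log-linear submodel $\mathcal{E}$ against the saturated multinomial, invoke the MaxEnt/MLE duality to identify $\pme$ as the restricted MLE, and read off the degrees of freedom as a dimension difference. The paper instead parameterises the constraint set $\mathbb{P}$ directly by the $M=\abs{\ifam{H}}$ free frequencies, Taylor-expands $\kl{\theta_N}{\theta^*}$ to second order, and combines the CLT $\sqrt{N}(\theta_N-\theta^*)\leadsto N(0,\Sigma)$ with the Fisher-information identity $\mean{\theta^*}{H(\omega;\theta^*)}=-\Sigma^{-1}$ via the continuous mapping theorem to reach the quadratic form $X^T\Sigma^{-1}X$. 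Your argument is cleaner once Wilks is taken for granted; the paper's is self-contained and essentially reproves the special case of Wilks that is needed here.

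Where you should tighten the argument is the interior-point step. You argue that derivability of $G$ would force a boundary cell, but the implication required for the proof runs the other way: you must show that if $\pme(\omega)=0$ for some $\omega$ then $G$ is derivable, so that non-derivability genuinely guarantees $\pme$ is interior. The paper supplies exactly this: by Csisz\'ar's theorem a zero of $\pme$ is a zero of every $p\in\mathbb{P}$, and the vanishing of that single cell, via inclusion--exclusion over the supersets of the corresponding itemset $Y$, expresses $p(G=1)$ as an alternating sum of proper sub-itemset frequencies, contradicting non-derivability. Without this step the regularity hypotheses you need for Wilks (or for your alternative second-order expansion) are asserted rather than established.
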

Theorem~\ref{thr:x2generic} is stated (but not proven) in a more general form
in~\cite{kullback68information}. A rather technical proof is provided in
Appendix~\ref{sec:assymptotic}.

Theorem~\ref{thr:x2generic} motivates us to define the \emph{normalised rank
measure} to be the one-sided $\chi^2$ test, that is,
\[
\nrank[\ifam{F}, D]{G} = \cdf\pr{2\,\abs{D}\,\rank[\ifam{F}, D]{G}},
\]
where $\cdf(a) = P\pr{\chi^2 < a}$ is the cumulative distribution function of $\chi^2$ with degree
$2^{\abs{G}} - 1 - \abs{\ifam{F}_G}$ of freedom. The number of degrees for
different rank measures are provided in Table~\ref{tab:summary}.

The following well-known result and its corollaries will play an important
role in solving the measures.

\begin{lemma}
Let $\pme$ be the Maximum Entropy distribution for itemsets $\ifam{F}$ and the
corresponding frequencies $\theta$. Let $q$ be a distribution satisfying the
itemsets $\ifam{F}$. Then we have
\[
-\sum_{\omega} q(\omega)\log \pme(\omega) = \ent{\pme}.
\]
\end{lemma}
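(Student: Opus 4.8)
The plan is to exploit the log-linear (exponential-family) form of the Maximum Entropy distribution. The standard Maximum Entropy theory, of which the expressions given earlier for {\rankC} and {\rankA} are special cases, tells us that on its support $\pme$ is log-linear in the itemset indicators: there is a constant $c$ (the log of the normalising factor) and reals $r_X$ such that
\[
-\log \pme(\omega) = c - \sum_{X \in \ifam{F}} r_X\, I\pr{\omega \text{ covers } X}
\]
for every $\omega$ with $\pme(\omega) > 0$. The essential feature I want to use is that $-\log \pme$ is \emph{affine} in the indicator variables $I\pr{\omega \text{ covers } X}$.

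First I would substitute this expression into the sum $-\sum_{\omega} q(\omega)\log \pme(\omega)$ and interchange the order of summation. Using $\sum_{\omega} q(\omega) = 1$ together with the identity $\sum_{\omega} q(\omega) I\pr{\omega \text{ covers } X} = q(X = 1)$, the double sum collapses to
\[
-\sum_{\omega} q(\omega)\log \pme(\omega) = c - \sum_{X \in \ifam{F}} r_X\, q(X = 1).
\]
The right-hand side depends on $q$ only through the frequencies $q(X=1)$, and not through the remaining degrees of freedom of $q$.

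Next I would invoke the hypothesis that $q$ satisfies the itemsets $\ifam{F}$, i.e.\ $q(X=1) = \theta_X$ for each $X \in \ifam{F}$; by construction $\pme$ satisfies exactly the same constraints. Hence the quantity $-\sum_{\omega} q(\omega)\log \pme(\omega)$ takes the same value for every distribution in $\mathbb{P}$, in particular the same value for $q$ and for $\pme$. Specialising to $q = \pme$, which trivially lies in $\mathbb{P}$, yields $-\sum_{\omega} \pme(\omega)\log \pme(\omega) = \ent{\pme}$, which is precisely the claimed identity.

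The step I expect to be the main obstacle is the technical handling of the support. If some constraint forces $\pme(\omega) = 0$ (for example when $\theta_X = 0$ for a covered itemset $X$), then $\log \pme(\omega)$ is undefined and the affine form above is only valid on the support of $\pme$. Here I would argue that any constraint forcing $\pme(\omega) = 0$ equally forces $q(\omega) = 0$, since $q$ obeys the very same itemset frequencies; under the usual convention $0\log 0 = 0$ these $\omega$ contribute nothing to either side, so both sums restrict cleanly to the common support on which the log-linear representation holds and the computation above goes through.
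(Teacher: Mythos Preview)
Your argument is the standard one and is correct. Note, however, that the paper does not actually prove this lemma: it is introduced as a ``well-known result'' and stated without proof, so there is no paper proof to compare against. Your approach---use the exponential-family form of $\pme$, collapse the cross-entropy to an affine function of the constrained frequencies $\theta_X$, and conclude by specialising to $q = \pme$---is precisely how this Pythagorean-type identity is usually established.

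One remark on the support issue, which you correctly flag as the delicate point. Your phrasing ``any constraint forcing $\pme(\omega)=0$ equally forces $q(\omega)=0$'' slightly understates what is needed: it is not that an individual constraint visibly pins $\pme(\omega)$ to zero, but rather that whenever the MaxEnt solution lands on the boundary, \emph{every} feasible distribution must vanish at that $\omega$. This is a genuine (if classical) fact about I-projections; the paper itself invokes it in the appendix, citing Csisz\'ar's Theorem~3.1. With that result in hand your restriction to the common support is fully justified and the proof goes through.
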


\begin{corollary}
\label{cor:decompose1}
Let $\ifam{F}$ be the family of itemsets. We have that
\[
	\rank[\ifam{F}]{G} = \ent{\pme} - \ent{q_G},
\]
where $\pme$ is the Maximum Entropy distribution and $q_G$ is the empirical
distribution.
\end{corollary}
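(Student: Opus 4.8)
The plan is to expand the Kullback--Leibler divergence defining $\rank[\ifam{F}]{G}$ into two separate sums and identify each of them. First I would split the logarithm in the definition of the measure, writing
\[
\rank[\ifam{F}]{G} = \sum_{\omega \in \Omega_G} q_G(\omega)\log q_G(\omega) - \sum_{\omega \in \Omega_G} q_G(\omega)\log \pme(\omega).
\]
The first sum is, by the definition of entropy, exactly $-\ent{q_G}$, so that term requires no further manipulation.

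The substance of the argument lies in the second sum. The key observation is that the empirical distribution $q_G$ lies in $\mathbb{P}$, i.e.\ it satisfies the itemsets $\ifam{F}_G$: the frequencies $\theta_X$ that cut out $\mathbb{P}$ are precisely those computed from $D_G$, and $q_G$ reproduces each of them by construction. Consequently $q_G$ meets the hypothesis of the preceding Lemma, applied with the constraint family taken to be the projected family $\ifam{F}_G$ (playing the role of the Lemma's $\ifam{F}$) and with $q = q_G$. The Lemma then gives
\[
-\sum_{\omega \in \Omega_G} q_G(\omega)\log \pme(\omega) = \ent{\pme}.
\]
Substituting both evaluations back into the split expression yields $\rank[\ifam{F}]{G} = \ent{\pme} - \ent{q_G}$, which is the claim.

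Since the result is essentially an immediate consequence of the Lemma, I do not expect a genuine obstacle. The only points that need care are bookkeeping rather than mathematical depth: verifying that $q_G \in \mathbb{P}$ so that the Lemma is in fact applicable, and keeping the roles of $\ifam{F}$ and its projection $\ifam{F}_G$ clearly separated when invoking the Lemma, since the Corollary is phrased in terms of $\ifam{F}$ while the constraints actually imposed on $\pme$ and $q_G$ are those of $\ifam{F}_G$.
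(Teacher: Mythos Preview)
Your proof is correct and is exactly the argument the paper has in mind: the corollary is stated without proof as an immediate consequence of the preceding Lemma, and the derivation you give (splitting the KL divergence and invoking the Lemma with $q = q_G \in \mathbb{P}$) is precisely how that consequence is obtained. Your remarks about the bookkeeping between $\ifam{F}$ and $\ifam{F}_G$ are apt but do not go beyond what the paper implicitly assumes.
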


\begin{corollary}
\label{cor:decompose2}
Let $\ifam{F}$, $\ifam{H}$ be the families of itemsets such that $\ifam{H}
\subseteq \ifam{F}$.  Let $\pme_1$ be the Maximum Entropy distribution for
$\ifam{F}$ and let $\pme_2$ be the Maximum Entropy distribution for $\ifam{H}$.
We have that
\[
	\rank[\ifam{F}]{G} = \kl{q_G}{\pme_1} - \kl{\pme_1}{\pme_2},
\]
$q_G$ is the empirical distribution.
\end{corollary}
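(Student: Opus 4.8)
The plan is to derive the claimed identity from the additive (``Pythagorean'') decomposition of the Kullback--Leibler divergence through the two Maximum Entropy distributions, using the preceding Lemma as the sole analytic tool. By the definition of the rank measure we have $\rank[\ifam{F}]{G} = \kl{q_G}{\pme_1}$ and $\rank[\ifam{H}]{G} = \kl{q_G}{\pme_2}$, so it suffices to establish the single chain
\[
\kl{q_G}{\pme_2} = \kl{q_G}{\pme_1} + \kl{\pme_1}{\pme_2};
\]
rearranging this together with the definitions of the two measures then yields the relation between the ranks and the cross term $\kl{\pme_1}{\pme_2}$.

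First I would expand each of the three divergences into an entropy part and a cross part, writing $\kl{a}{b} = -\ent{a} - \sum_\omega a(\omega)\log b(\omega)$. After collecting terms, the identity reduces to showing that the two cross terms $-\sum_\omega q_G(\omega)\log\pme_1(\omega)$ and $-\sum_\omega \pme_1(\omega)\log\pme_2(\omega)$ can each be replaced by an entropy; once this is done, the $\ent{\pme_1}$ contributions cancel and only $\ent{\pme_2}-\ent{q_G}$ survives, which by Corollary~\ref{cor:decompose1} is exactly $\rank[\ifam{H}]{G}$.

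The two replacements are both instances of the preceding Lemma. For the first, $q_G$ is the empirical distribution and therefore satisfies every frequency constraint, in particular those of $\ifam{F}$, while $\pme_1$ is the Maximum Entropy distribution for $\ifam{F}$; the Lemma then gives $-\sum_\omega q_G(\omega)\log\pme_1(\omega) = \ent{\pme_1}$. For the second I would exploit the nesting $\ifam{H}\subseteq\ifam{F}$: since $\pme_1$ matches all the frequencies prescribed by $\ifam{F}$, it a fortiori matches those prescribed by $\ifam{H}$, so $\pme_1$ is a distribution satisfying the itemsets $\ifam{H}$, and applying the Lemma to the pair $(\pme_1,\pme_2)$, with $\pme_2$ the Maximum Entropy distribution for $\ifam{H}$, gives $-\sum_\omega \pme_1(\omega)\log\pme_2(\omega)=\ent{\pme_2}$.

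I expect the main obstacle to be precisely this second application: one must verify that $\pme_1$ genuinely lies in the feasible set used to define $\pme_2$, which is exactly where the hypothesis $\ifam{H}\subseteq\ifam{F}$ is essential, since the Lemma requires the comparison distribution to satisfy the same family that defines the Maximum Entropy distribution. Once that membership is confirmed the Lemma applies verbatim, and the remainder is only the mechanical expansion of the logarithms and the cancellation of the $\ent{\pme_1}$ terms. Thus the whole argument hinges on the monotonicity of the constraint sets that permits the Lemma to be invoked twice.
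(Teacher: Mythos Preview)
Your proposal is correct and is exactly the argument the paper has in mind: the result is stated without proof as an immediate corollary of the preceding Lemma, and your two applications of that Lemma---once to the pair $(q_G,\pme_1)$ and once to $(\pme_1,\pme_2)$, the latter justified precisely by the inclusion $\ifam{H}\subseteq\ifam{F}$---are the intended derivation. (As you implicitly recognised, the displayed formula in the paper carries a typo: since $\rank[\ifam{F}]{G}=\kl{q_G}{\pme_1}$ by definition, the first $\pme_1$ on the right should read $\pme_2$, and what you have established is the correct Pythagorean identity $\rank[\ifam{F}]{G}=\kl{q_G}{\pme_2}-\kl{\pme_1}{\pme_2}=\rank[\ifam{H}]{G}-\kl{\pme_1}{\pme_2}$.)
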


\begin{corollary}
\label{cor:monotone}
Let $\ifam{F}$, $\ifam{H}$ be the families of itemsets such that $\ifam{H}
\subseteq \ifam{F}$. We have that
\[
	\rank[\ifam{F}]{G} \leq \rank[\ifam{H}]{G}.
\]
\end{corollary}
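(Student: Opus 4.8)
The plan is to reduce the claim to the elementary monotonicity of maximum entropy under relaxation of constraints. First I would invoke Corollary~\ref{cor:decompose1} for each of the two families, writing $\rank[\ifam{F}]{G} = \ent{\pme_{\ifam{F}}} - \ent{q_G}$ and $\rank[\ifam{H}]{G} = \ent{\pme_{\ifam{H}}} - \ent{q_G}$, where $\pme_{\ifam{F}}$ and $\pme_{\ifam{H}}$ are the Maximum Entropy distributions associated with $\ifam{F}$ and $\ifam{H}$. The empirical entropy $\ent{q_G}$ is identical in both expressions, so the target inequality $\rank[\ifam{F}]{G} \leq \rank[\ifam{H}]{G}$ is equivalent to $\ent{\pme_{\ifam{F}}} \leq \ent{\pme_{\ifam{H}}}$.

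The second step is to compare the two maximisations. Since $\ifam{H} \subseteq \ifam{F}$, restricting to the proper nonempty subsets of $G$ preserves the inclusion, so $\ifam{H}_G \subseteq \ifam{F}_G$. Any distribution on $\Omega_G$ that satisfies all the itemsets in $\ifam{F}_G$ therefore also satisfies the fewer itemsets in $\ifam{H}_G$; denoting by $\mathbb{P}_{\ifam{F}}$ and $\mathbb{P}_{\ifam{H}}$ the respective sets of satisfying distributions, this gives $\mathbb{P}_{\ifam{F}} \subseteq \mathbb{P}_{\ifam{H}}$. Now $\pme_{\ifam{H}}$ maximises the entropy over $\mathbb{P}_{\ifam{H}}$, while $\pme_{\ifam{F}}$ is merely one feasible point of that larger set, so $\ent{\pme_{\ifam{F}}} \leq \ent{\pme_{\ifam{H}}}$, and combining with the first step finishes the argument.

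An even shorter route is to apply Corollary~\ref{cor:decompose2}, which writes the gap $\rank[\ifam{H}]{G} - \rank[\ifam{F}]{G}$ as the divergence $\kl{\pme_{\ifam{F}}}{\pme_{\ifam{H}}}$; this term is nonnegative because it is a Kullback-Leibler divergence, and the inequality follows at once. I do not expect a genuine obstacle in either version. The one point that requires care is the orientation of the containment: enlarging the itemset family adds constraints and hence shrinks the feasible set, so the inclusion runs $\mathbb{P}_{\ifam{F}} \subseteq \mathbb{P}_{\ifam{H}}$ and not the reverse; in the alternative argument the same subtlety resurfaces as getting the two arguments of the KL term in the correct order.
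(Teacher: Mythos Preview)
Your proposal is correct. The paper leaves this corollary without an explicit proof, but its placement immediately after Corollary~\ref{cor:decompose2} signals that the intended argument is exactly your ``shorter route'': the decomposition gives $\rank[\ifam{H}]{G} - \rank[\ifam{F}]{G} = \kl{\pme_{\ifam{F}}}{\pme_{\ifam{H}}} \geq 0$. Your primary route via Corollary~\ref{cor:decompose1} and the containment $\mathbb{P}_{\ifam{F}} \subseteq \mathbb{P}_{\ifam{H}}$ is an equally valid and slightly more self-contained alternative, since it does not rely on the Pythagorean-type identity of Corollary~\ref{cor:decompose2} but only on the definition of the maximum-entropy distribution as a constrained maximiser.
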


\subsection{Flexible Models}

So far we have considered ranks with fixed families of itemsets. In this section
we introduce $2$ additional models. In these models the itemsets are selected
such that they minimise the rank.

Our first rank measure is the optimal tree model. A tree model can be described
as a tree defined on the attributes of $G$. The corresponding family $\ifam{T}$ of
itemsets contains the attributes from $G$ and the itemsets of size $2$ corresponding
to the edges of the tree.

\begin{example}
Consider $G = \set{a, b, c, d, e}$ and consider the tree given in
Figure~\ref{fig:tree}. The corresponding family of itemsets is $\ifam{T} =
\set{a, b, c, d, e, ab, ac, ad, de}$.
\begin{figure}
\centering
\includegraphics[scale=0.45]{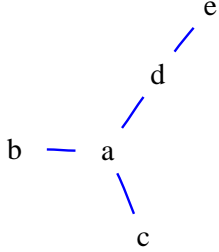}
\label{fig:tree}
\caption{A toy tree model. The related itemsets $\set{a, b, c, d, e, ab, ac, ad, de}$
correspond to the attributes and the edges of the tree.}
\end{figure}
\end{example}

We can show that the Maximum Entropy distribution for $\ifam{T}$ has the form
\[
\pme = \prod_{\set{a,b} \in \ifam{T}} \pme\pr{a,b} / \prod_{a \in G} \pme\pr{a}.
\]
This is, of course, Chow-Liu tree model\cite{chow68tree}. We define the optimal tree
to be the ne that minimises the rank, that is,
\[
\ifam{T}^* = \underset{\ifam{T} \text{ is a tree}}{\arg \min} \rank[\ifam{T}, D]{G}.
\]

To solve this tree let $\pind$ be the independence distribution.
Corollary~\ref{cor:decompose1} allows us to rewrite the rank measure as
\[
\rank[\ifam{T}]{G} = \kl{q_D}{\pme} = \kl{q_D}{\pind} - \kl{\pme}{\pind}.
\]
Note that the first term $\kl{q_D}{\pind}$ does not depend on $\ifam{T}$. Hence
we need to maximise the second term $\kl{\pme}{\pind}$. This is the mutual information
of the tree and maximising this term is equivalent to finding maximum spanning tree
in the mutual information graph. This can be done in polynomial time~\cite{chow68tree}.

There is a deep connection between the rank $\rank{G; \ifam{T}}$ and the rank
for D-trees suggested in~\cite{heikinheimo07entropy}. We can rewrite, by
applying Corollary~\ref{cor:decompose1}, the rank as
\[
\rank[\ifam{T}]{G} = \kl{q_D}{\pme} = \ent{\pme} - \ent{q_G}.
\]
The first term $\ent{\pme}$ is the rank that is used in~\cite{heikinheimo07entropy}.
The authors in~\cite{heikinheimo07entropy} seek patterns that have small $\ent{\pme}$,
that is, trees that have strong dependencies between the attributes,
whereas we are interested in patterns that produce large $\rank[\ifam{T}^*]{G}$,
sets of attributes whose joint distribution cannot be explained even by the best
tree model.

Our second model involves in finding a downward closed family $\ifam{F}$ of
itemsets that produces the smallest normalised rank. Note that
Corollary~\ref{cor:monotone} implies that the rank decreases when we increase
the number of known itemsets. However, this does not hold for the normalised
rank and we will see that, contrary to the expectations, the best model can be
different than $\ifam{A}_G$, the set of all sub-itemsets of $G$.  In other
words, knowing all sub-itemsets does not guarantee the best model but, in fact,
itemsets of higher order may mislead the prediction.

Unlike with the tree models, to our knowledge, there is no polynomial algorithm
for finding the optimal downward closed family. Hence, we suggest a simple
greedy approach. We start from the itemsets of size $1$ and select the itemset
from the negative border that minimises the rank. The itemset is added into the
family and the procedure is repeat until there is no itemset that can decrease
the rank. The algorithm is stated in Algorithm~\ref{algo:greedy}. We use $\ifam{F}^*$
to denote the resulting family.

\begin{algorithm}[ht!]
\caption{Greedy algorithm for finding the optimal downward closed family of
itemsets. The input is the data set $D$ and the query itemset $G$. The output
is $\ifam{F}^*$ a family of itemsets that produces low rank for the itemset.}
\begin{algorithmic}
\STATE $\ifam{F}^* \define \ifam{I}_G$. \COMMENT{Initialise $\ifam{F}^*$ with itemsets of size $1$.}
\REPEAT
\STATE $Y \define \underset{X \in \negbord{\ifam{F}^*}}{\arg \min} \nrank[\ifam{F}^* \cup X]{G}$.
\IF{$\nrank[\ifam{F}^* \cup Y]{G} < \nrank[\ifam{F}^*]{G}$}
\STATE $\ifam{F}^* \define \ifam{F}^* \cup Y$.
\ENDIF
\UNTIL{no more changes in $\ifam{F}^*$.}
\end{algorithmic}
\label{algo:greedy}
\end{algorithm}

\subsection{Computing Rank}
\label{sec:computing}
Corollary~\ref{cor:decompose1} allows us to rewrite the rank as a difference of
two entropies
\[
\rank{G} = \kl{q_G}{\pme} = \ent{\pme} - \ent{q_G}.
\]
Both distributions have $\abs{\Omega_G} = 2^{\abs{G}}$ entries. However,
the distribution $q_G$ can have only $\abs{D}$ positive entries at
maximum, hence the term $\ent{q_G}$ can be computed efficiently.

The challenge in calculating the measure is to solve the Maximum Entropy
distribution $\pme$ and calculate its entropy. This can be done in polynomial
time for the independence model and for the tree models.  However, in the
general case solving $\pme$ is an \np-complete
problem~\cite{tatti06complexity,cooper90complexity}; In such cases the
distribution is solved using Iterative Scaling
algorithm~\cite{darroch72gis,jirousek95iterative}.  The algorithm consists of
consecutive steps. One such step requires $O\pr{\abs{\Omega_G}} =
O\pr{2^{\abs{G}}}$ time. Hence computing the measure requires exponential time
but it is doable for itemsets of reasonable size. The summary for evaluation
times is provided in Table~\ref{tab:summary}.

\begin{table}
\centering
\begin{tabular}{rrrr}
\toprule
Measure & Description & \# of degrees & Evaluation time \\
\midrule
$\rank[\ifam{I}]{G}$ & Independence model     & $2^{\abs{G}} - 1 - \abs{G}$  & $O\pr{\abs{G}}$ \\
$\rank[\ifam{C}]{G}$ & Gaussian model         & $2^{\abs{G}} - 1 - \half\abs{G}\pr{\abs{G} + 1}$ & $O\pr{2^{\abs{G}}}$ per iter.\\
$\rank[\ifam{A}]{G}$ & All subsets model      & $1$ & $O\pr{2^{\abs{G}}}$ per iter.\\
$\rank[\ifam{T}^*]{G}$ & Optimal tree model   & $2^{\abs{G}} - 2\abs{G}$ & $O\pr{\abs{G}^2}$ \\
$\rank[\ifam{F}^*]{G}$ & Optimal family model & $2^{\abs{G}} - 1 - \abs{\ifam{F}}$ & $O\pr{8^{\abs{G}}}$ per iter.\\
\bottomrule
\end{tabular}
\label{tab:summary}
\caption{Summary of the rank measure. The number of degrees, the third column,
is used as a a parameter for $\chi^2$ distribution, when computing the normalised rank.
The fourth column represents the evaluation times for the entropy of $\pme$.}
\end{table}

\paragraph{The effect of pruning itemsets.}
Note that in defining the measure we only use itemsets that are subsets of the
query itemset $G$. This pruning guarantees that the number of entries in the
distributions is $2^{\abs{G}}$ and not, at worst, $2^K$, where $K$ is the
number of columns in the dataset. Pruning attributes is essential since solving
$\pme$ is exponential to the number of attributes. The downside is that pruning
may change the prediction as the following example demonstrates.

\begin{example}
Assume that we have $3$ attributes, $a$, $b$, and $c$. Our known itemsets are
$\ifam{F} = \set{a, b, c, ac, bc}$ and their frequencies are $\theta_a =
\theta_b = \theta_c = \theta_{ac} = \theta_{bc} = 1/2$. In other words, the
attributes are indentical and correspond to a fair coin flip. Assume that we are
interested in rank of $G = ab$. In this case the pruned family of itemsets
is $\ifam{F}_G = \set{a, b}$ and the Maximum Entropy distribution is the
uniform distribution. The empirical distribution is
\[
\begin{split}
q_G(a = 0, b = 0) = q_G(a = 1, b = 1) & = 1/2 \\ 
q_G(a = 1, b = 0) = q_G(a = 0, b = 1) &= 0.
\end{split}
\]
The rank is then $\rank[\ifam{F}]{ab} = 0.69$. However, if we had used the
frequencies of $ac$ and $bc$, we would have concluded that $a = b$ and that the
Maximum Entropy distribution is equal to the empirical distribution, hence the
rank would have been $0$.
\end{example}

In~\cite{tatti06projections} we investigate the effect of pruning attributes
and conclude that in some cases we can remove a large portion of attributes
outside $G$. However, in those cases, the family of known itemsets has many
restrictions and, for instance, we cannot remove safely any attribute from the
gaussian model.

\section{Related Work}
\label{sec:related}
Traditionally, the support (frequency) of the itemset is used for ranking
itemsets. Alternative measures that resemble the support are studied
in~\cite{omiecinski03measure}.

Our work resembles approach of~\cite{brin97beyond} in which the authors defined
the significance of an itemset by comparing the distribution $q_G$ against the
independence model. The authors used $\chi^2$ statistical test as a measure,
that is, if $p$ is the distribution related to the independence model, the rank
measure is
\begin{equation}
\brin{G} = \sum_{\omega \in \Omega_G} \frac{\pr{q_G(\omega) - p(\omega)}^2}{p(\omega)}.
\label{eq:brin}
\end{equation}
In~\cite{dumouchel01empirical} the authors also compare the frequency of an
itemset against the independence model but in addition they use Bayes screening
to smooth the values. Also, in~\cite{aggarwal98new} the authors proposed the
collective strength as a measure of significance. To be more specific, we say
that a transaction $\omega \in \Omega_G$ is \emph{good} if it contains only
$0$s or only $1$s.  Let $p$ be the distribution related to the independence
model. Then the measure is
\begin{equation}
\colstr{G} = \frac{q_G\pr{\omega \text{ is good}}}{p\pr{\omega \text{ is good}}}
\frac{p\pr{\omega \text{ is bad}}}{q_G\pr{\omega \text{ is bad}}}
\label{eq:colstr}.
\end{equation}
This measure obtains small values when data obeys the independence model. In a
related work presented in~\cite{dong99efficient} the authors define an itemset
to be interesting if its frequency increases significantly from one dataset to
another. In~\cite{gallo07mini} the authors order itemsets based on their
p-values.  In~\cite{heikinheimo07entropy} the authors used entropy of tree
models for ranking itemsets. In addition, many measures has been suggested for
ranking association
rules~\cite{piatetsky91rules,brin97itemsets,agrawal93mining,jaroszewicz02pruning}.

The authors in~\cite{pavlov03beyond} showed empirically that Maximum Entropy
model provides excellent estimates for itemsets. Rank can be used for pruning a
large family of itemsets by picking the itemsets having the largest rank.
Other pruning methods are proposed
in~\cite{boulicaut00approximation,calders02mining,pasquier99discovering}. The
authors in~\cite{webb06significant} suggest a generic framework for discovering
significant rules. In addition, a relevant framework is described
in~\cite{mielikainen03pattern}; the authors define a pattern ordering given an
estimation algorithm and a loss function. In~\cite{noren06who} the authors use
information component analysis to find patterns in a drug safety database.

\section{Experiments}
\label{sec:experiments}
In this section we present our empirical results. In the first $3$ sections we
explain the datasets and the setup. In our experiments we investigate the
significance of itemsets, how different measures are related to each other, and
the monotonicity of the ranks.

\subsection{Synthetic Datasets}
For the testing purposes we created two synthetic datasets. Each dataset
contained $100$ attributes and $5000$ rows. The first dataset,
\dtname{gen-ind}, was generated such that the attributes were independent.
The margins were sampled uniformly from $\spr{0,1}$. In the second dataset,
\dtname{gen-copy}, each column was a copy of the previous column corrupted
by the symmetric white noise. The amount of noise, that is the probability
\[
    p\pr{a_i = 1 \mid a_{i - 1} = 0} = p\pr{a_i = 0 \mid a_{i - 1} = 1},
\]
was selected uniformly from $\spr{0,1}$ for each column $a_i$, individually.
The first column was generated by a coin flip. Our expectations are that
in \dtname{gen-ind} the itemsets of size $1$ are significant and that
in \dtname{gen-copy} the itemsets of size $2$ are significant.

\subsection{Real Datasets}
In our experiments we used the following real-world datasets. Data in
\dtname{Accidents}\footnote{\url{http://fimi.cs.helsinki.fi/data/accidents.dat.gz}}
were obtained from the Belgian ``Analysis Form for Traffic Accidents'' forms
that is filled out by a police officer for each traffic accident that occurs
with injured or deadly wounded casualties on a public road in Belgium. In
total, $340\,183$ traffic accident records are included in the
dataset~\cite{geurts03accidents}.  The datasets
\dtname{POS}\footnote{\url{http://www.ecn.purdue.edu/KDDCUP/data/BMS-POS.dat.gz}},
\dtname{WebView-1}\footnote{\url{http://www.ecn.purdue.edu/KDDCUP/data/BMS-WebView-1.dat.gz}}
and
\dtname{WebView-2}\footnote{\url{http://www.ecn.purdue.edu/KDDCUP/data/BMS-WebView-2.dat.gz}}
were contributed by Blue Martini Software as the KDD Cup 2000
data~\cite{kohavi00bms}. \dtname{POS} contains several years worth of
point-of-sale data from a large electronics retailer.  \dtname{WebView-1} and
\dtname{WebView-2} contain several months worth of click-stream data from two
e-commerce web sites.
\dtname{Kosarak}\footnote{\url{http://fimi.cs.helsinki.fi/data/kosarak.dat.gz}}
consists of (anonymised) click-stream data of a Hungarian on-line news portal.
\dtname{Retail}\footnote{\url{http://fimi.cs.helsinki.fi/data/retail.dat.gz}}
is a retail market basket data supplied by an anonymous Belgian retail
supermarket store~\cite{brijs99retail}.  The dataset
\dtname{Paleo}\footnote{NOW public release 030717 available
from~\cite{fortelius05now}.}  contains information of species fossils found in
specific paleontological sites in Europe~\cite{fortelius05now}, preprocessed
as in~\cite{fortelius06spectral}.

\subsection{Setup for the Experiments}
\label{sec:setup}
In this section we will describe how we conducted our experiments. We reduced
the largest datasets by selecting the first $10000$ rows and $200$ most
frequent attributes. From each dataset we computed all \emph{almost
non-derivable} itemsets. By almost non-derivable we mean that the difference
between the upper bound and the lower bound of a given itemset, say $G$, is at
least $n$ transactions. In other words, if we know the frequencies of all
sub-itemsets of $G$, then we cannot predict the frequency of $G$ within $n$
transactions. If $n = 0$, then an itemset is non-derivable. It is known that
the family of almost non-derivable itemsets is
anti-monotonic~\cite[Lemma~3.1]{calders02mining}. A reason to use almost
non-derivable itemsets instead of frequent itemsets is the statement of
Theorem~\ref{thr:deriv2}, that is, $\rankA = 0$ if the itemset is derivable.
The other reason is that we want to study how the measure behaves for
infrequent itemsets.

To keep the sizes of the obtained families within reasonable bounds we used
different thresholds for different datasets: For \dtname{gen-ind},
\dtname{Retail} and \dtname{WebView-2} we set $n = 5$. For \dtname{POS} the
threshold $n$ was set to $10$ and for \dtname{gen-copy} and \dtname{Accidents}
$n$ was set to $100$. For the rest of the datasets we set $n = 0$, that is, we
mined all non-derivable itemsets from these datasets.

For each itemset from the obtained itemsets we queried the following measures:
\begin{itemize}
\item Frequency.
\item Normalised rank measures {\nrankI}, {\nrankC}, {\nrankA}, {\nrankT}, {\nrankF}.
\item Measures discussed in Section~\ref{sec:related}: A $\chi^2$ test
$\brin{G}$ defined in Eq.~\ref{eq:brin} and a collective strength $\colstr{G}$
defined in Eq.~\ref{eq:colstr}.
\end{itemize}

The evaluation times and the sizes of the query families are given in
Table~\ref{tab:query_stats}.

\begin{table}
\center
\begin{tabular}{r@{\hspace{0.2cm}}r@{\hspace{0.2cm}}r@{\hspace{0.2cm}}r r@{\hspace{0.2cm}}r@{\hspace{0.2cm}}r@{\hspace{0.2cm}}r@{\hspace{0.2cm}}r@{\hspace{0.2cm}}r}
\toprule
& & & & \multicolumn{5}{c}{Evaluation times} \\
\cmidrule{5-9}
Data & $n$ & \# of $G$ & $\max \abs{G}$ & {\nrankI}& {\nrankC}& {\nrankA}& {\nrankT}& {\nrankF} \\
\midrule
\dtname{gen-ind} & $5$ & $156699$ & $6$ & $2s$ & $52s$ & $29min$ & $2s$ & $11min$ \\
\dtname{gen-copy} & $100$ & $111487$ & $4$ & $0s$ & $12s$ & $57s$ & $0s$ & $1min$ \\
\dtname{Accidents} & $100$ & $354399$ & $6$ & $2s$ & $1min$ & $19min$ & $3s$ & $108min$ \\
\dtname{Kosarak} & $5$ & $223734$ & $5$ & $1s$ & $4s$ & $9s$ & $0s$ & $47s$ \\
\dtname{Paleo} & $0$ & $166903$ & $5$ & $0s$ & $8s$ & $35s$ & $0s$ & $2min$ \\
\dtname{POS} & $10$ & $246640$ & $6$ & $1s$ & $8s$ & $27s$ & $1s$ & $5min$ \\
\dtname{Retail} & $0$ & $818813$ & $6$ & $3s$ & $19s$ & $49s$ & $4s$ & $4min$ \\
\dtname{WebView-1} & $5$ & $226313$ & $5$ & $1s$ & $5s$ & $8s$ & $1s$ & $39s$ \\
\dtname{WebView-2} & $0$ & $715398$ & $6$ & $3s$ & $27s$ & $2min$ & $4s$ & $11min$ \\
\bottomrule
\end{tabular}
\caption{The evaluation times and the sizes of the query families. The second
column is the threshold used in mining almost non-derivable itemsets. The fourth
column is the maximal size of a query itemset. The evaluation time does not include
the time spent mining itemsets.}
\label{tab:query_stats}
\end{table}

\subsection{Significant Itemsets}
\label{sec:significant}
Our first experiment is to study how many of the itemsets are significant. We
did this by comparing the our rank measures with risk level 0.05. The results
are given in Tables~\ref{tab:sig_table_ind}--\ref{tab:sig_table_flex}. We also
provide a typical example of box plots in Figure~\ref{fig:box_example}.

\begin{table}
\center
\begin{tabular}{rrrrrrrr}
\toprule
& \multicolumn{6}{c}{itemset size} \\
\cmidrule{2-7}
Data & 1 & 2 & 3 & 4 & 5 & 6 & All \\
\midrule
\dtname{gen-ind} & $.92$ & $.05$ & $.04$ & $.03$ & $.02$ & $.01$ & $.03$ \\
\dtname{gen-copy} & $.08$ & $.14$ & $.24$ & $.03$ & -- & -- & $.07$ \\
\dtname{Accidents} & $.99$ & $.60$ & $.95$ & $1$ & $1$ & $1$ & $.97$ \\
\dtname{Kosarak} & $1$ & $.62$ & $.99$ & $1$ & $1$ & -- & $.96$ \\
\dtname{Paleo} & $1$ & $.30$ & $.81$ & $.99$ & $1$ & -- & $.88$ \\
\dtname{POS} & $1$ & $.45$ & $.99$ & $1$ & $1$ & $1$ & $.95$ \\
\dtname{Retail} & $1$ & $.14$ & $.30$ & $.93$ & $1$ & $1$ & $.45$ \\
\dtname{WebView-1} & $1$ & $.70$ & $1$ & $1$ & $1$ & -- & $.97$ \\
\dtname{WebView-2} & $1$ & $.20$ & $.69$ & $1$ & $1$ & $1$ & $.85$ \\
\bottomrule
\end{tabular}
\caption{The percentages of significant itemsets according to {\nrankI}.  Each
entry is a fraction of itemsets of specific size calculated from a specific
dataset.  Significance is measured using $\chi^2$ distribution with $0.05$ risk
level.}
\label{tab:sig_table_ind}
\end{table}

\begin{table}
\center
\begin{tabular}{r
r@{\hspace{0.2cm}}r@{\hspace{0.2cm}}r@{\hspace{0.2cm}}r@{\hspace{0.2cm}}r@{\hspace{0.2cm}}r@{\hspace{0.2cm}}r@{\hspace{0.2cm}}c
r@{\hspace{0.2cm}}r@{\hspace{0.2cm}}r@{\hspace{0.2cm}}r@{\hspace{0.2cm}}r@{\hspace{0.2cm}}r@{\hspace{0.2cm}}r}
\toprule
& \multicolumn{7}{c}{{\nrankC}, itemset size} & & \multicolumn{7}{c}{{\nrankA}, itemset size}\\
\cmidrule{2-8}
\cmidrule{10-16}
Data & 1 & 2 & 3 & 4 & 5 & 6 & All & & 1 & 2 & 3 & 4 & 5 & 6 & All \\
\midrule
\dtname{gen-ind} & $.92$ & $.05$ & $.06$ & $.05$ & $.04$ & $.03$ & $.05$ &  & $.92$ & $.05$ & $.06$ & $.06$ & $.06$ & $.06$ & $.06$ \\
\dtname{gen-copy} & $.08$ & $.14$ & $.06$ & $.03$ & -- & -- & $.03$ &  & $.08$ & $.14$ & $.06$ & $.05$ & -- & -- & $.05$ \\
\dtname{Accidents} & $.99$ & $.60$ & $.21$ & $.45$ & $.62$ & $.60$ & $.45$ &  & $.99$ & $.60$ & $.21$ & $.07$ & $.05$ & $.06$ & $.11$ \\
\dtname{Kosarak} & $1$ & $.62$ & $.32$ & $.50$ & $.38$ & -- & $.37$ &  & $1$ & $.62$ & $.32$ & $.10$ & $.04$ & -- & $.33$ \\
\dtname{Paleo} & $1$ & $.30$ & $.12$ & $.15$ & $.21$ & -- & $.15$ &  & $1$ & $.30$ & $.12$ & $.21$ & $.64$ & -- & $.18$ \\
\dtname{POS} & $1$ & $.45$ & $.09$ & $.21$ & $.43$ & $.66$ & $.17$ &  & $1$ & $.45$ & $.09$ & $.06$ & $.07$ & $.05$ & $.11$ \\
\dtname{Retail} & $1$ & $.14$ & $.04$ & $.08$ & $.12$ & $.38$ & $.05$ &  & $1$ & $.14$ & $.04$ & $.15$ & $.27$ & $.25$ & $.07$ \\
\dtname{WebView-1} & $1$ & $.70$ & $.48$ & $.32$ & $.52$ & -- & $.48$ &  & $1$ & $.70$ & $.48$ & $.09$ & $.07$ & -- & $.45$ \\
\dtname{WebView-2} & $1$ & $.20$ & $.11$ & $.20$ & $.88$ & $1$ & $.17$ &  & $1$ & $.20$ & $.11$ & $.16$ & $.36$ & $.48$ & $.15$ \\
\bottomrule
\end{tabular}
\caption{The percentages of significant itemsets according to {\nrankC} and {\nrankA}. Each
entry is a fraction of itemsets of specific size calculated from a specific
dataset. Significance is measured using $\chi^2$ distribution with $0.05$ risk
level.}
\label{tab:sig_table_cov_all}
\end{table}

\begin{table}
\center
\begin{tabular}{r
r@{\hspace{0.2cm}}r@{\hspace{0.2cm}}r@{\hspace{0.2cm}}r@{\hspace{0.2cm}}r@{\hspace{0.2cm}}r@{\hspace{0.2cm}}r@{\hspace{0.2cm}}c
r@{\hspace{0.2cm}}r@{\hspace{0.2cm}}r@{\hspace{0.2cm}}r@{\hspace{0.2cm}}r@{\hspace{0.2cm}}r@{\hspace{0.2cm}}r}
\toprule
& \multicolumn{7}{c}{{\nrankT}, itemset size} & & \multicolumn{7}{c}{{\nrankF}, itemset size}\\
\cmidrule{2-8}
\cmidrule{10-16}
Data & 1 & 2 & 3 & 4 & 5 & 6 & All & & 1 & 2 & 3 & 4 & 5 & 6 & All \\
\midrule
\dtname{gen-ind} & $.92$ & $.05$ & $.02$ & $.01$ & $.01$ & $0$ & $.01$ &  & $.92$ & $.05$ & $.01$ & $.01$ & $0$ & $0$ & $.01$ \\
\dtname{gen-copy} & $.08$ & $.14$ & $.02$ & $0$ & -- & -- & $.01$ &  & $.08$ & $.14$ & $.01$ & $0$ & -- & -- & $.01$ \\
\dtname{Accidents} & $.99$ & $.60$ & $.40$ & $.80$ & $.95$ & $.97$ & $.75$ &  & $.99$ & $.60$ & $.18$ & $.12$ & $.13$ & $.02$ & $.15$ \\
\dtname{Kosarak} & $1$ & $.62$ & $.80$ & $.94$ & $1$ & -- & $.80$ &  & $1$ & $.62$ & $.32$ & $.05$ & $.03$ & -- & $.32$ \\
\dtname{Paleo} & $1$ & $.30$ & $.10$ & $.35$ & $.81$ & -- & $.24$ &  & $1$ & $.30$ & $.06$ & $.05$ & $.04$ & -- & $.07$ \\
\dtname{POS} & $1$ & $.45$ & $.47$ & $.99$ & $1$ & $1$ & $.65$ &  & $1$ & $.45$ & $.08$ & $.02$ & $.02$ & $0$ & $.09$ \\
\dtname{Retail} & $1$ & $.14$ & $.03$ & $.18$ & $.78$ & $1$ & $.07$ &  & $1$ & $.14$ & $.01$ & $.02$ & $.02$ & $.13$ & $.02$ \\
\dtname{WebView-1} & $1$ & $.70$ & $.83$ & $1$ & $1$ & -- & $.84$ &  & $1$ & $.70$ & $.46$ & $.07$ & $.30$ & -- & $.43$ \\
\dtname{WebView-2} & $1$ & $.20$ & $.11$ & $.57$ & $1$ & $1$ & $.37$ &  & $1$ & $.20$ & $.06$ & $.03$ & $.14$ & $.44$ & $.05$ \\
\bottomrule
\end{tabular}
\caption{The percentages of significant itemsets according to {\nrankT} and {\nrankF}.  Each
entry is a fraction of itemsets of specific size calculated from a specific
dataset.  Significance is measured using $\chi^2$ distribution with $0.05$ risk
level.}
\label{tab:sig_table_flex}
\end{table}

\begin{figure}
\center
\includegraphics[width=0.2\textwidth]{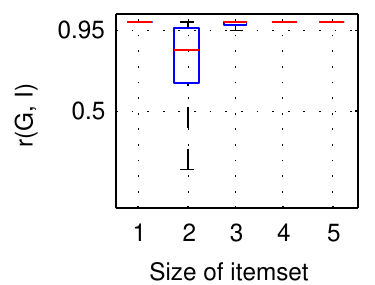}%
\includegraphics[width=0.2\textwidth]{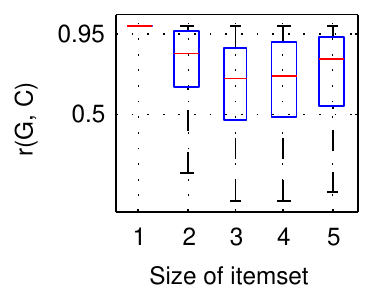}%
\includegraphics[width=0.2\textwidth]{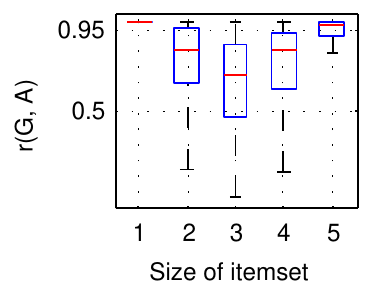}%
\includegraphics[width=0.2\textwidth]{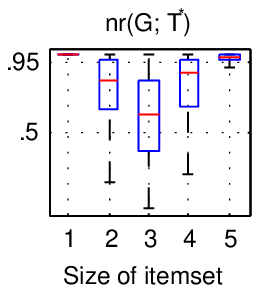}%
\includegraphics[width=0.2\textwidth]{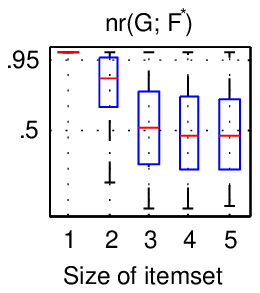}
\caption{Box plots of the rank measures computed from \dtname{Paleo}.}
\label{fig:box_example}
\end{figure}

Let us first study \dtname{gen-ind}, a synthetic dataset with independent
columns.  We see from Table~\ref{tab:sig_table_ind} that according to {\nrankI}
a large portion of itemsets of size $1$ are significant but only a small
portion of itemsets having size larger than $1$ is significant. This is an
expected result since the frequencies obey the independence model. In
Tables~\ref{tab:sig_table_cov_all} we have similar results for {\nrankC} and
for {\nrankA}. However, the values of {\nrankC} and for {\nrankA} tend to be
larger than the values of {\nrankI}.  The reason for this is a type of
overlearning: Since the frequencies of itemsets are calculated from the
datasets, they are imprecise. Hence, the itemsets with larger size mislead us
during prediction, because the resulting Maximum Entropy distribution is not an
independent model (although close to one).

Let us continue by studying \dtname{gen-copy}, a synthetic data in which an
attribute is a noisy copy of the previous attribute. We see that {\nrankT}
tends to have smaller ranks than {\nrankI} when $G$ has size $3$. The reason
for this is that, unlike with \dtname{gen-ind}, the independence model cannot
explain the dataset. However, when we predict using also the itemsets of size
$2$, the prediction becomes more accurate. The measures {\nrankC} and {\nrankA}
also produce small ranks, however, these ranks tend to be slightly larger than
the ranks of {\nrankT}.

We turn our attention to real datasets. We see that for these datasets the
independence model is too strict: According to {\nrankI} almost all itemsets
are significant: The results change drastically, when we use richer models.
According to {\nrankC} or {\nrankA} only $5\%$--$50\%$ of the itemsets are
significant, depending on the dataset.  Similar overfitting that occurred with
\dtname{gen-ind} also occurs in some but not all real datasets (see
Figure~\ref{fig:box_example}). For instance, in \dtname{Retail} {\nrankA} tends
to produce higher values than {\nrankC} but not in \dtname{POS}.

\subsection{The Effect of the Known Itemsets}
We continued our experiments by comparing the measures {\nrankI}, {\nrankC},
{\nrankA}, {\nrankT}, and {\nrankF} against each other. This was done by
calculating the correlations between the rank measures. The results are given
in Tables~\ref{tab:corrs1}~and~\ref{tab:corrs_flex}.

\begin{table}
\center
\begin{tabular}{rccc}
\toprule
& {\nrankI} & {\nrankI} & {\nrankC} \\
& vs. & vs. & vs. \\
Data & {\nrankC} & {\nrankA} & {\nrankA} \\
\midrule
\dtname{gen-ind} & $.74$ & $.26$ & $.39$ \\
\dtname{gen-copy} & $.52$ & $.28$ & $.53$ \\
\dtname{Accidents} & $.17$ & $.07$ & $.37$ \\
\dtname{Kosarak} & $.14$ & $.13$ & $.90$ \\
\dtname{Paleo} & $.16$ & $.22$ & $.67$ \\
\dtname{POS} & $.12$ & $.10$ & $.77$ \\
\dtname{Retail} & $.62$ & $.67$ & $.88$ \\
\dtname{WebView-1} & $.14$ & $.12$ & $.88$ \\
\dtname{WebView-2} & $.43$ & $.51$ & $.68$ \\
\bottomrule
\end{tabular}
\caption{Correlations between the measures {\nrankI}, {\nrankC}, and {\nrankA}.}
\label{tab:corrs1}
\end{table}

\begin{table}
\center
\begin{tabular}{r
r@{\hspace{0.2cm}}r@{\hspace{0.2cm}}r@{\hspace{0.2cm}}r@{\hspace{0.2cm}}
r@{\hspace{0.2cm}}r@{\hspace{0.2cm}}r@{\hspace{0.2cm}}r}
\toprule
& \multicolumn{3}{c}{{\nrankT} vs.} & & \multicolumn{4}{c}{{\nrankF} vs.} \\
\cmidrule{2-4} \cmidrule{6-9}
Data & {\nrankI} & {\nrankC} & {\nrankA} & & {\nrankI} & {\nrankC} & {\nrankA} & {\nrankT}\\
\midrule
\dtname{gen-ind} & $.81$ & $.94$ & $.37$ &  & $.81$ & $.91$ & $.36$ & $.98$ \\
\dtname{gen-copy} & $.62$ & $.92$ & $.49$ &  & $.64$ & $.89$ & $.49$ & $.97$ \\
\dtname{Accidents} & $.34$ & $.57$ & $.20$ &  & $.07$ & $.58$ & $.54$ & $.29$ \\
\dtname{Kosarak} & $.41$ & $.41$ & $.36$ &  & $.13$ & $.85$ & $.93$ & $.46$ \\
\dtname{Paleo} & $.35$ & $.64$ & $.47$ &  & $.16$ & $.84$ & $.67$ & $.62$ \\
\dtname{POS} & $.42$ & $.34$ & $.28$ &  & $.05$ & $.72$ & $.84$ & $.21$ \\
\dtname{Retail} & $.66$ & $.82$ & $.82$ &  & $.59$ & $.92$ & $.84$ & $.86$ \\
\dtname{WebView-1} & $.56$ & $.29$ & $.24$ &  & $.12$ & $.86$ & $.93$ & $.28$ \\
\dtname{WebView-2} & $.59$ & $.62$ & $.65$ &  & $.34$ & $.77$ & $.72$ & $.54$ \\
\bottomrule
\end{tabular}
\caption{Correlations between the flexible measures {\nrankT} and {\nrankF} and the
measures {\nrankI}, {\nrankC}, and {\nrankA}.}
\label{tab:corrs_flex}
\end{table}

From the results we see that all correlations are positive. For the real
datasets the correlations between {\nrankC} and {\nrankA} are systematically
higher than the correlations between {\nrankI} and {\nrankA} or between
{\nrankC} and {\nrankA}. This suggests that {\nrankI} produces different ranks
whereas {\nrankC} and {\nrankA} are more similar. This supports the behaviour
we have seen in Section~\ref{sec:significant}.

The measure {\nrankF} correlate more with {\nrankA} and {\nrankC} than with
{\nrankI}. The correlation between {\nrankF} and {\nrankT} is somewhat weaker
but it is stronger than the correlation between {\nrankF} and {\nrankI}.

\subsection{Flexible Models}

Our next goal is to compare the flexible measures {\nrankT} and {\nrankF}
against the rest of the measures. From Table~\ref{tab:sig_table_flex} we see
that {\nrankF} tend to produce the smallest amount of significant itemsets
whereas the {\nrankT} produces large ranks, especially for queries with many
attributes.

We calculated the number of queries in which {\nrankT} and {\nrankF} produce
smaller rank than the rest of the measures. Since the measures are equivalent
for the queries of size $1$ and $2$, these queries were ignored.  From the
results given in Table~\ref{tab:perf_flex} we see that the flexible models
outperform {\nrankI}, however, the performance against other measure depends on
the data set. For instance, {\nrankF} outperform {\nrankC} and {\nrankA} in
\dtname{Retail} but produces larger ranks in \dtname{Kosarak}.  This suggests
that the greedy algorithm sometimes fails to find the optimal family
$\ifam{F}^*$.

\begin{table}
\center
\begin{tabular}{r
r@{\hspace{0.2cm}}r@{\hspace{0.2cm}}r@{\hspace{0.2cm}}r@{\hspace{0.2cm}}
r@{\hspace{0.2cm}}r@{\hspace{0.2cm}}r@{\hspace{0.2cm}}r}
\toprule
& \multicolumn{3}{c}{${\nrankT} \leq $} & & \multicolumn{4}{c}{${\nrankF} \leq$} \\
\cmidrule{2-4} \cmidrule{6-9}
Data & {\nrankI} & {\nrankC} & {\nrankA} & & {\nrankI} & {\nrankC} & {\nrankA} & {\nrankT}\\
\midrule
\dtname{gen-ind} & $.84$ & $.97$ & $.78$ &  & $1$ & $.99$ & $.81$ & $.95$ \\
\dtname{gen-copy} & $.79$ & $.96$ & $.82$ &  & $1$ & $.99$ & $.86$ & $.99$ \\
\dtname{Accidents} & $1$ & $.16$ & $.13$ &  & $1$ & $.94$ & $.66$ & $.95$ \\
\dtname{Kosarak} & $1$ & $.08$ & $.08$ &  & $1$ & $.38$ & $.36$ & $.96$ \\
\dtname{Paleo} & $.99$ & $.47$ & $.58$ &  & $1$ & $.91$ & $.85$ & $.86$ \\
\dtname{POS} & $1$ & $.12$ & $.12$ &  & $1$ & $.65$ & $.62$ & $.90$ \\
\dtname{Retail} & $.92$ & $.77$ & $.80$ &  & $1$ & $.94$ & $.92$ & $.62$ \\
\dtname{WebView-1} & $1$ & $.19$ & $.19$ &  & $1$ & $.52$ & $.49$ & $.93$ \\
\dtname{WebView-2} & $.99$ & $.37$ & $.46$ &  & $1$ & $.89$ & $.87$ & $.79$ \\
\bottomrule
\end{tabular}
\caption{Percentages of queries in which the flexible measures {\nrankT} and
{\nrankF} outperform the other rank measures. Queries only of size $3$ or larger
were considered.}
\label{tab:perf_flex}
\end{table}

We studied the sizes of itemsets occurring in $\ifam{F}^*$, the family of known
itemsets in {\nrankF}. To be more precise, let $\ifam{F}^*_G$ be the family of
known itemsets for the query $G$. Let $L$ be the size of itemsets we are
interested in. We define the ratio $r_L$ to be
\[
r_L = \frac{\sum_{G} \abs{\set{X \in \ifam{F}^*_G; \abs{X} = L}}}{\sum_G {\abs{G} \choose L}},
\]
that is, the number of itemset of size $L$ occurring in $\ifam{F}^*$ divided by
the maximum number of occurrences. The ratios $r_L$ are given in
Table~\ref{tab:best_used}. We see that the itemsets of size $2$ and $3$ are
frequently used, however, the itemsets of larger size are rarely used.

\begin{table}
\center
\begin{tabular}{r rrrr}
\toprule
& \multicolumn{4}{c}{Ratio of used itemsets} \\
\cmidrule{2-5}
Data & 2 & 3 & 4 & 5 \\
\midrule
\dtname{gen-ind} & $.35$ & $.01$ & $0$ & $0$ \\
\dtname{gen-copy} & $.36$ & $.01$ & -- & -- \\
\dtname{Accidents} & $.87$ & $.36$ & $.02$ & $0$ \\
\dtname{Kosarak} & $.95$ & $.49$ & $.01$ & -- \\
\dtname{Paleo} & $.74$ & $.16$ & $0$ & -- \\
\dtname{POS} & $.96$ & $.47$ & $.01$ & $0$ \\
\dtname{Retail} & $.62$ & $.13$ & $0$ & $0$ \\
\dtname{WebView-1} & $.93$ & $.44$ & $0$ & -- \\
\dtname{WebView-2} & $.81$ & $.26$ & $.07$ & $0$ \\
\bottomrule
\end{tabular}
\caption{Number of itemsets occurring in $\ifam{F}^*$, the family of known
itemsets in {\rankF}, normalised by the maximum number of possible occurrences.
Each column represent itemsets of specific size.}
\label{tab:best_used}
\end{table}

\subsection{Rank vs.~Other Methods}
We compared our measures against the other ranking methods described in
Section~\ref{sec:setup}.  Namely, we calculated the correlations of {\nrankI},
{\nrankC}, {\nrankA}, {\nrankT}, and {\nrankF} against the frequency of $G$,
$\brin{G}$, the $\chi^2$ test for independency, and $\colstr{G}$, the
collective strength of the itemset $G$. The results are presented in
Tables~\ref{tab:corrs2}~and~\ref{tab:corrs3}.  We also studied the
relationships by plotting our measures as functions of the aforementioned
approaches and such examples are given in Figure~\ref{fig:rank_vs_other}.

\begin{table*}
\center
\begin{tabular}{r
r@{\hspace{0.2cm}}r@{\hspace{0.2cm}}r@{\hspace{0.5cm}}
r@{\hspace{0.2cm}}r@{\hspace{0.2cm}}r@{\hspace{0.5cm}}
r@{\hspace{0.2cm}}r@{\hspace{0.2cm}}r}
\toprule
& \multicolumn{3}{c}{{\nrankI} vs.}
& \multicolumn{3}{c}{{\nrankC} vs.}
& \multicolumn{3}{c}{{\nrankA} vs.} \\
\cmidrule{2-4}
\cmidrule{5-7}
\cmidrule{8-10}
Data & freq. & $\brin{G}$ & $\colstr{G}$ & freq. & $\brin{G}$ & $\colstr{G}$ & freq. & $\brin{G}$ & $\colstr{G}$ \\
\midrule
\dtname{gen-ind} & $.06$ & $.99$ & $-.01$ & $.03$ & $.72$ & $-.01$ & $0$ & $.25$ & $0$ \\
\dtname{gen-copy} & $.15$ & $1$ & $.02$ & $.07$ & $.52$ & $.02$ & $.01$ & $.27$ & $.01$ \\
\dtname{Accidents} & $.01$ & $1$ & $.02$ & $-.01$ & $.17$ & $.05$ & $.03$ & $.07$ & $.01$ \\
\dtname{Kosarak} & $.01$ & $.98$ & $.20$ & $.01$ & $.14$ & $.27$ & $0$ & $.13$ & $.21$ \\
\dtname{Paleo} & $.18$ & $.95$ & $.39$ & $.01$ & $.15$ & $.10$ & $-.03$ & $.20$ & $.03$ \\
\dtname{POS} & $.05$ & $.99$ & $.22$ & $.09$ & $.12$ & $.20$ & $.07$ & $.10$ & $0$ \\
\dtname{Retail} & $.04$ & $.97$ & $.31$ & $.05$ & $.57$ & $.17$ & $.05$ & $.61$ & $.25$ \\
\dtname{WebView-1} & $.06$ & $.98$ & $.19$ & $.07$ & $.15$ & $-.29$ & $.05$ & $.13$ & $-.32$ \\
\dtname{WebView-2} & $.12$ & $.96$ & $.33$ & $.17$ & $.36$ & $.39$ & $.12$ & $.43$ & $.25$ \\
\bottomrule
\end{tabular}
\caption{Correlations between the rank measures {\nrankI}, {\nrankC}, and
{\nrankA} and the base measures: the frequency of $G$, $\brin{G}$,
the $\chi^2$ test for independency, and $\colstr{G}$, the collective strength
of the itemset $G$.}
\label{tab:corrs2}
\end{table*}

\begin{table*}
\center
\begin{tabular}{r
r@{\hspace{0.2cm}}r@{\hspace{0.2cm}}r@{\hspace{0.5cm}}
r@{\hspace{0.2cm}}r@{\hspace{0.2cm}}r}
\toprule
& \multicolumn{3}{c}{{\nrankT} vs.}
& \multicolumn{3}{c}{{\nrankF} vs.} \\
\cmidrule{2-4}
\cmidrule{5-7}
Data & freq. & $\brin{G}$ & $\colstr{G}$ & freq. & $\brin{G}$ & $\colstr{G}$ \\
\midrule
\dtname{gen-ind} & $.07$ & $.79$ & $-.01$ & $.06$ & $.79$ & $-.01$ \\
\dtname{gen-copy} & $.16$ & $.62$ & $.03$ & $.16$ & $.64$ & $.03$ \\
\dtname{Accidents} & $-.02$ & $.33$ & $.04$ & $.04$ & $.07$ & $.01$ \\
\dtname{Kosarak} & $.01$ & $.39$ & $.32$ & $0$ & $.13$ & $.25$ \\
\dtname{Paleo} & $.24$ & $.28$ & $.44$ & $.09$ & $.12$ & $.03$ \\
\dtname{POS} & $.12$ & $.41$ & $.37$ & $.07$ & $.05$ & $-.23$ \\
\dtname{Retail} & $.06$ & $.58$ & $.31$ & $.06$ & $.53$ & $.14$ \\
\dtname{WebView-1} & $.11$ & $.55$ & $.16$ & $.04$ & $.13$ & $-.35$ \\
\dtname{WebView-2} & $.20$ & $.49$ & $.46$ & $.16$ & $.28$ & $.14$ \\
\bottomrule
\end{tabular}
\caption{Correlations between the rank measures {\nrankT} and {\nrankF}
and the base measures: the frequency of $G$, $\brin{G}$,
the $\chi^2$ test for independency, and $\colstr{G}$, the collective strength
of the itemset $G$.}
\label{tab:corrs3}
\end{table*}

\begin{figure*}
\centering
\includegraphics[width=0.25\textwidth]{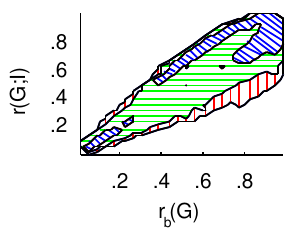}%
\includegraphics[width=0.25\textwidth]{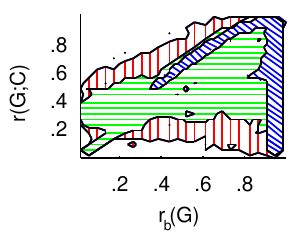}%
\includegraphics[width=0.25\textwidth]{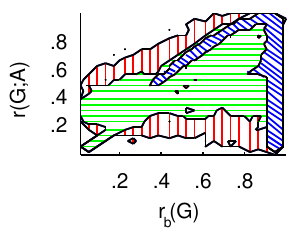}%
\includegraphics[width=0.25\textwidth]{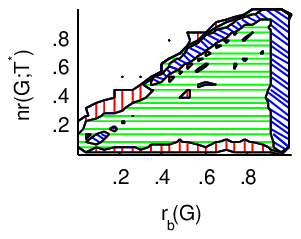}
\includegraphics[width=0.25\textwidth]{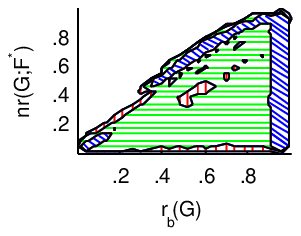}%
\includegraphics[width=0.25\textwidth]{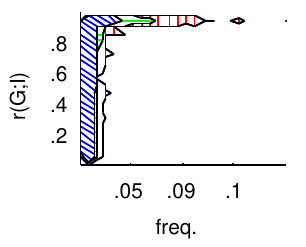}%
\includegraphics[width=0.25\textwidth]{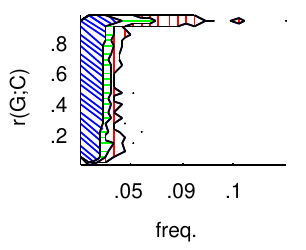}%
\includegraphics[width=0.25\textwidth]{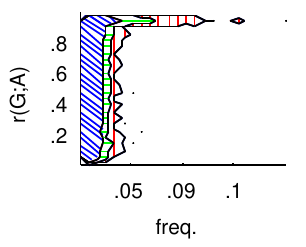}
\includegraphics[width=0.25\textwidth]{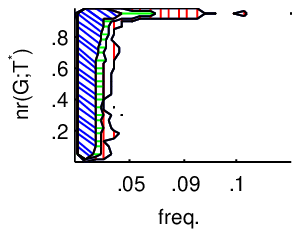}%
\includegraphics[width=0.25\textwidth]{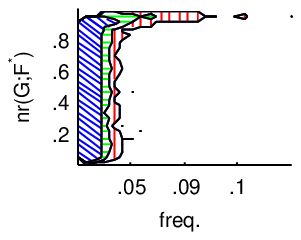}%
\includegraphics[width=0.25\textwidth]{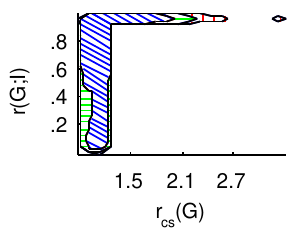}%
\includegraphics[width=0.25\textwidth]{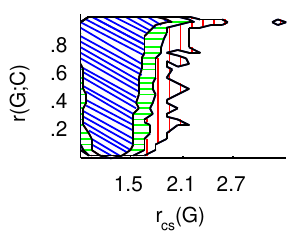}
\includegraphics[width=0.25\textwidth]{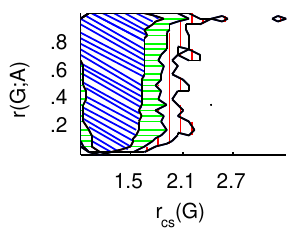}%
\includegraphics[width=0.25\textwidth]{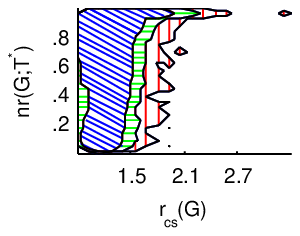}%
\includegraphics[width=0.25\textwidth]{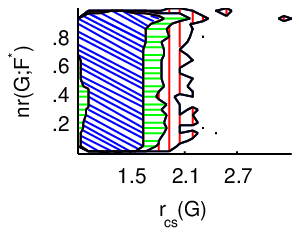}%
\includegraphics[width=0.25\textwidth]{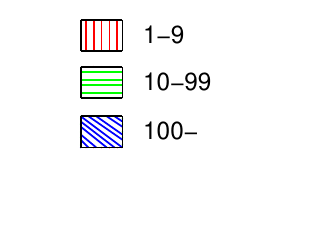}
\caption{Ranks as functions of the base measures. The plots are calculated
from \dtname{Paleo} dataset.}
\label{fig:rank_vs_other}
\end{figure*}

Our first observation is that {\nrankI} correlates strongly with $\brin{G}$.
This is an expected result since both test the independency of attributes
inside the itemsets and also because {\nrankI} is asymptotically a $\chi^2$
test (see~Theorem~\ref{thr:x2generic}).  There is some correlation between
$\brin{G}$ and the rest of the measures although this correlation is much
weaker compared to {\nrankI}.

Apart from \dtname{WebView-2}, there is little correlation between the measures
and the frequency.

The correlation between the measures and the collective strength $\colstr{G}$
exists but varies depending on the method and the dataset. The strongest
correlations are obtained when $\colstr{G}$ is compared against {\nrankI} or
{\nrankT}.  The dependency between {\nrankI} and $\colstr{G}$ is a natural
result since $\colstr{G}$ produces small values when attributes are
independent.

\subsection{Monotonicity of Rank}
In this section we investigate the relationship between the rank of an itemset
and the ranks of its sub-itemsets. Namely, we tested whether the measures are
monotonic, that is, whether $\nrank[\ifam{F}]{G} \geq \nrank[\ifam{F}]{H}$ for
all $H \subset G$. We deliberately ignored sub-itemsets having size $1$ since
they all have very high rank. We also tested whether the measures are
anti-monotonic, that is, decreasing w.r.t.\ set inclusion.

\begin{table*}
\center
\begin{tabular}{r
r@{\hspace{0.2cm}}r@{\hspace{0.2cm}}r@{\hspace{0.2cm}}r@{\hspace{0.2cm}}r
r@{\hspace{0.2cm}}r@{\hspace{0.2cm}}r@{\hspace{0.2cm}}r@{\hspace{0.2cm}}r
r@{\hspace{0.2cm}}r@{\hspace{0.2cm}}r@{\hspace{0.2cm}}r@{\hspace{0.2cm}}r}
\toprule
& \multicolumn{5}{c}{\nrankI} & \multicolumn{5}{c}{\nrankC} & \multicolumn{5}{c}{\nrankA} \\
\cmidrule{2-16}
Data & 3 & 4 & 5 & 6 & All & 3 & 4 & 5 & 6 & All &  3 & 4 & 5 & 6 & All\\
\midrule
\dtname{gen-ind} & $.09$ & $.02$ & $.01$ & $0$ & $.03$ & $.27$ & $.03$ & $.01$ & $0$ & $.05$ & $.27$ & $.11$ & $.06$ & $.03$ & $.11$ \\
\dtname{gen-copy} & $.15$ & $.01$ & -- & -- & $.04$ & $.20$ & $.02$ & -- & -- & $.05$ & $.20$ & $.10$ & -- & -- & $.12$ \\
\dtname{Accidents} & $.78$ & $.92$ & $.97$ & $.99$ & $.90$ & $.01$ & $.02$ & $.02$ & $0$ & $.02$ & $.01$ & $0$ & $0$ & $0$ & $0$ \\
\dtname{Kosarak} & $.93$ & $.98$ & $1$ & -- & $.93$ & $0$ & $0$ & $0$ & -- & $0$ & $0$ & $0$ & $0$ & -- & $0$ \\
\dtname{Paleo} & $.40$ & $.61$ & $.84$ & -- & $.51$ & $.04$ & $0$ & $0$ & -- & $.02$ & $.04$ & $0$ & $0$ & -- & $.02$ \\
\dtname{POS} & $.87$ & $1$ & $1$ & $1$ & $.92$ & $0$ & $0$ & $.01$ & $0$ & $0$ & $0$ & $0$ & $0$ & $0$ & $0$ \\
\dtname{Retail} & $.11$ & $.42$ & $.92$ & $1$ & $.19$ & $.04$ & $0$ & $0$ & $0$ & $.03$ & $.04$ & $.02$ & $0$ & $0$ & $.04$ \\
\dtname{WebView-1} & $.98$ & $1$ & $1$ & -- & $.98$ & $.04$ & $0$ & $0$ & -- & $.04$ & $.04$ & $0$ & $0$ & -- & $.04$ \\
\dtname{WebView-2} & $.39$ & $.88$ & $1$ & $1$ & $.67$ & $.04$ & $0$ & $.08$ & $1$ & $.02$ & $.04$ & $0$ & $0$ & $0$ & $.02$ \\
\bottomrule
\end{tabular}
\caption{Percentages of itemsets satisfying the property of monotonicity. The
itemset $G$ satisfies the property if $\nrank[\ifam{F}]{G} \geq
\nrank[\ifam{F}]{H}$ for all $H \subset G$ such that $\abs{H} \geq 2$.}
\label{tab:mono}
\end{table*}

\begin{table*}
\center
\begin{tabular}{r
r@{\hspace{0.2cm}}r@{\hspace{0.2cm}}r@{\hspace{0.2cm}}r@{\hspace{0.2cm}}r
r@{\hspace{0.2cm}}r@{\hspace{0.2cm}}r@{\hspace{0.2cm}}r@{\hspace{0.2cm}}r}
\toprule
& \multicolumn{5}{c}{\nrankT} & \multicolumn{5}{c}{\nrankF} \\
\cmidrule{2-11}
Data & 3 & 4 & 5 & 6 & All & 3 & 4 & 5 & 6 & All \\
\midrule
\dtname{gen-ind} & $.13$ & $.01$ & $0$ & $0$ & $.02$ & $.07$ & $.01$ & $0$ & $0$ & $.02$ \\
\dtname{gen-copy} & $.10$ & $.01$ & -- & -- & $.02$ & $.05$ & $.01$ & -- & -- & $.02$ \\
\dtname{Accidents} & $.02$ & $.13$ & $.35$ & $.47$ & $.17$ & $.01$ & $0$ & $0$ & $0$ & $0$ \\
\dtname{Kosarak} & $0$ & $.26$ & $.32$ & -- & $.03$ & $0$ & $0$ & $0$ & -- & $0$ \\
\dtname{Paleo} & $.02$ & $0$ & $0$ & -- & $.01$ & $.02$ & $0$ & $0$ & -- & $.01$ \\
\dtname{POS} & $0$ & $.23$ & $.97$ & $1$ & $.11$ & $0$ & $0$ & $0$ & $0$ & $0$ \\
\dtname{Retail} & $.03$ & $0$ & $.14$ & $1$ & $.02$ & $.02$ & $0$ & $0$ & $0$ & $.02$ \\
\dtname{WebView-1} & $.04$ & $.07$ & $.89$ & -- & $.05$ & $.03$ & $0$ & $0$ & -- & $.03$ \\
\dtname{WebView-2} & $.03$ & $.02$ & $.72$ & $1$ & $.04$ & $.03$ & $0$ & $0$ & $0$ & $.01$ \\
\bottomrule
\end{tabular}
\caption{Percentages of itemsets satisfying the property of monotonicity. The
itemset $G$ satisfies the property if $\nrank[\ifam{F}]{G} \geq
\nrank[\ifam{F}]{H}$ for all $H \subset G$ such that $\abs{H} \geq 2$.}
\label{tab:mono2}
\end{table*}

\begin{table*}
\center
\begin{tabular}{r
r@{\hspace{0.2cm}}r@{\hspace{0.2cm}}r@{\hspace{0.2cm}}r@{\hspace{0.2cm}}r
r@{\hspace{0.2cm}}r@{\hspace{0.2cm}}r@{\hspace{0.2cm}}r@{\hspace{0.2cm}}r
r@{\hspace{0.2cm}}r@{\hspace{0.2cm}}r@{\hspace{0.2cm}}r@{\hspace{0.2cm}}r}
\toprule
& \multicolumn{5}{c}{\nrankI} & \multicolumn{5}{c}{\nrankC} & \multicolumn{5}{c}{\nrankA} \\
\cmidrule{2-16}
Data & 3 & 4 & 5 & 6 & All & 3 & 4 & 5 & 6 & All & 3 & 4 & 5 & 6 & All\\
\midrule
\dtname{gen-ind} & $.21$ & $.07$ & $.03$ & $.02$ & $.07$ & $.25$ & $.07$ & $.03$ & $.01$ & $.07$ & $.25$ & $.08$ & $.02$ & $.01$ & $.07$ \\
\dtname{gen-copy} & $.15$ & $.06$ & -- & -- & $.08$ & $.25$ & $.08$ & -- & -- & $.11$ & $.25$ & $.07$ & -- & -- & $.10$ \\
\dtname{Accidents} & $.03$ & $0$ & $0$ & $0$ & $.01$ & $.62$ & $.04$ & $0$ & $0$ & $.16$ & $.62$ & $.21$ & $.05$ & $.02$ & $.26$ \\
\dtname{Kosarak} & $.02$ & $.06$ & $0$ & -- & $.02$ & $.93$ & $.03$ & $.01$ & -- & $.83$ & $.93$ & $.33$ & $.08$ & -- & $.86$ \\
\dtname{Paleo} & $.02$ & $0$ & $0$ & -- & $.01$ & $.43$ & $.04$ & $0$ & -- & $.22$ & $.43$ & $.07$ & $0$ & -- & $.23$ \\
\dtname{POS} & $.01$ & $.01$ & $.04$ & $.23$ & $.01$ & $.87$ & $.07$ & $0$ & $0$ & $.57$ & $.87$ & $.18$ & $.06$ & $.09$ & $.61$ \\
\dtname{Retail} & $.17$ & $0$ & $0$ & $0$ & $.13$ & $.38$ & $.05$ & $.01$ & $0$ & $.30$ & $.38$ & $.03$ & $.01$ & $0$ & $.29$ \\
\dtname{WebView-1} & $0$ & $0$ & $0$ & -- & $0$ & $.69$ & $.11$ & $0$ & -- & $.62$ & $.69$ & $.39$ & $.15$ & -- & $.66$ \\
\dtname{WebView-2} & $.07$ & $.01$ & $.14$ & $.96$ & $.04$ & $.48$ & $.06$ & $0$ & $0$ & $.24$ & $.48$ & $.06$ & $.07$ & $.04$ & $.25$ \\
\bottomrule
\end{tabular}
\caption{Percentages of itemsets satisfying the property of anti-monotonicity.
The itemset $G$ satisfies the property if $\nrank[\ifam{F}]{G} \leq
\nrank[\ifam{F}]{H}$ for all $H \subset G$ such that $\abs{H} \geq 2$.}
\label{tab:amono}
\end{table*}

\begin{table*}
\center
\begin{tabular}{r
r@{\hspace{0.2cm}}r@{\hspace{0.2cm}}r@{\hspace{0.2cm}}r@{\hspace{0.2cm}}r
r@{\hspace{0.2cm}}r@{\hspace{0.2cm}}r@{\hspace{0.2cm}}r@{\hspace{0.2cm}}r}
\toprule
& \multicolumn{5}{c}{\nrankT} & \multicolumn{5}{c}{\nrankF} \\
\cmidrule{2-11}
Data & 3 & 4 & 5 & 6 & All & 3 & 4 & 5 & 6 & All \\
\midrule
\dtname{gen-ind} & $.45$ & $.16$ & $.06$ & $.02$ & $.15$ & $.53$ & $.16$ & $.05$ & $.01$ & $.15$ \\
\dtname{gen-copy} & $.43$ & $.18$ & -- & -- & $.22$ & $.51$ & $.17$ & -- & -- & $.23$ \\
\dtname{Accidents} & $.58$ & $.01$ & $0$ & $0$ & $.13$ & $.69$ & $.29$ & $.07$ & $.02$ & $.32$ \\
\dtname{Kosarak} & $.91$ & $0$ & $0$ & -- & $.81$ & $.95$ & $.50$ & $.07$ & -- & $.90$ \\
\dtname{Paleo} & $.52$ & $.02$ & $0$ & -- & $.25$ & $.56$ & $.15$ & $.01$ & -- & $.34$ \\
\dtname{POS} & $.88$ & $0$ & $0$ & $0$ & $.56$ & $.90$ & $.47$ & $.13$ & $0$ & $.73$ \\
\dtname{Retail} & $.72$ & $.02$ & $0$ & $0$ & $.55$ & $.75$ & $.13$ & $.03$ & $0$ & $.60$ \\
\dtname{WebView-1} & $.62$ & $0$ & $0$ & -- & $.55$ & $.70$ & $.56$ & $.22$ & -- & $.68$ \\
\dtname{WebView-2} & $.69$ & $.01$ & $0$ & $0$ & $.31$ & $.71$ & $.23$ & $.06$ & $0$ & $.44$ \\
\bottomrule
\end{tabular}
\caption{Percentages of itemsets satisfying the property of anti-monotonicity.
The itemset $G$ satisfies the property if $\nrank[\ifam{F}]{G} \leq
\nrank[\ifam{F}]{H}$ for all $H \subset G$ such that $\abs{H} \geq 2$.}
\label{tab:amono2}
\end{table*}

From the results given in Tables~\ref{tab:mono}--\ref{tab:amono2} our first
observation is that {\nrankI} are increasing for real datasets but not for the
synthetic datasets. The raw values of {\nrankI} are indeed increasing but this
does not hold for the P-values since the number of degrees varies.  The measure
{\nrankT} tends also be monotonic but not as much as {\nrankI}.  On the
contrary, {\nrankC}, {\nrankA}, and {\nrankF} are increasing for extremely few
itemsets.

Table~\ref{tab:amono} suggests that {\nrankC}, {\nrankA}, and {\nrankF}
satisfies the anti-monotonicity to some degree. Measures {\nrankC} and
{\nrankA} are anti-monotonic for relatively high percentage of itemsets of size
$3$. Among itemsets of size $4$, {\nrankF} satisfies the property of
anti-monotonicity for a slightly larger portion of itemsets than {\nrankA}
that, in turn, is anti-monotonic in more queries than {\nrankC}.

\section{Conclusions}
\label{sec:conclusions}
We have given a definition of a measure for ranking itemsets. The idea is to
predict the frequency of an itemset from the frequencies of its sub-itemsets
and measure the deviation between the actual frequency and the prediction. The
more the itemset deviates from the prediction, the more it is significant. We
estimated the frequencies using Maximum Entropy and we used Kullback-Leibler
divergence to measure the deviation. In the general case, the measure can be
computed in $O(2^{\abs{G}})$ time, where $\abs{G}$ is the size of the itemset
needed to be ranked, however, the measures {\rankT} and {\rankI} can be
computed in polynomial time.

We introduced two flexible rank measures {\rankT} and {\rankF}. The measure
{\rankT} can be solved by finding the optimal spanning tree in the mutual
information matrix. For solving {\rankF} we proposed a simple greedy approach.

A clear advantage of our approach to the previous methods is that the previous
solutions calculate the deviation from the independence model whereas we are
able to use the information available from the itemsets of larger size, and
thus use more flexible models.

Our empirical results for real data show that the independence is too strict
assumption: Almost all itemsets were significant according to {\rankI}. The
results changed when we applied the more flexible models, {\rankC} and
{\rankA}. We also observed an interesting type of overfitting: In some cases we
obtain a better prediction if we do not use all the available information.

We showed that there is a little correlation between our measures and the other
approaches.  For instance, infrequent itemset may be significant and frequent
itemset may be insignificant. We also observed that {\rankI} is monotonic for a
large portion of itemsets, whereas {\rankC} and {\rankA} are anti-monotonic for
a significant portion of itemsets.

\section*{Acknowledgments}
The author wishes to thank Gemma Garriga, Heikki Mannila, and Robert Gwadera
for their comments.
\bibliography{rank}
\appendix
\section{Asymptotic Behaviour of the Divergence}
\label{sec:assymptotic}
By asymptotic behaviour we mean the following: We assume that
we have an ensemble of datasets $D_i$ such that $\abs{D_i} \to \infty$. We
assume that $G$ is non-derivable in each $D_i$ and that the frequencies of
$\ifam{F}_G$ are all equal.

Define $N = \abs{D}$ and $M = \abs{\ifam{H}}$. Let $\mathbb{P}$ be the set of
distributions satisfying the itemsets $\ifam{F}_G$.  It is easy to see that we
can parameterize $\mathbb{P}$ with frequencies of $\ifam{H}$.  In other words,
let $\ifam{H} = \enset{H_1}{H_M}$.  Then for each $p \in \mathbb{P}$, there is
a unique frequency vector $\theta \in \real^M$ such that $\theta_i = p(H_i =
1)$. Let $\Theta$ be the set of all possible frequency vectors. The set
$\Theta$ is a closed polytope --- the vectors located on the boundary of
$\Theta$ corresponds to the distributions in which at least one entry is $0$.

Let $\theta^{*}$ be a frequency vector corresponding to the Maximum Entropy
distribution $\pme$.  We need to show that $\theta^{*}$ is not a boundary
vector. Assume the converse, then $\pme$ must have $\pme(\omega) = 0$ for
some $\omega$. We know that this implies that $p(\omega) = 0$ for all $p \in
\mathbb{P}$~\cite[Theorem 3.1]{csiszar75divergence}. Let $Y$ be the itemset
containing the elements for which $\omega$ has positive entries. This in turns
(see~\cite{calders02mining}) implies that for each $p \in \mathbb{P}$
\[
p(G = 1) = \sum_{Y \subseteq Z \subseteq G} (-1)^{\abs{G} - \abs{Z}} p(Z = 1),
\]
making $G$ derivable and contradicting the statement.

Since $\theta^{*}$ is an inner point of $\Theta$, let $B \subset \Theta$ be an
open ball around $\theta^{*}$. Assume that $\theta \in B$. By taking the expectation
of the second-degree Taylor expansion of $\log\frac{p(\omega ; \theta^{*})}{p(\omega; \theta)}$ around $\theta$
we arrive to
\[
	-\kl{\theta}{\theta^{*}} = \frac{1}{2} \Delta\theta^T\mean{\theta}{H(\omega; \eta)} \Delta\theta,
\]
where $\Delta\theta = \theta^{*} - \theta$ and $\eta$ is a vector lying between
$\theta$ and $\theta^{*}$, and $H$ is the Hessian matrix of $\log p(\omega; \eta)$. 

Let $\theta_N$ be the frequencies of $\ifam{H}$ obtained from a dataset
containing $N$ points. According to $0$-hypothesis we have $\theta_N \leadsto
\theta^{*}$ and $\sqrt{N}\pr{\theta_N - \theta^{*}} \leadsto N(0, \Sigma)$,
where $\Sigma$ is a covariance matrix,
\[
	\Sigma_{ij} = \pme(H_i = 1, H_j = 1) - \pme(H_i = 1)\pme(H_j = 1). 
\]
If $\theta_N \in B$, we let $\eta_N$ correspond to $\eta$ in the Taylor
expansion, otherwise we set $\eta_N = 0$. We can show that $\eta_N \leadsto
\theta^{*}$~\cite[Theorem 2.7]{vaart98statistics}. Consider a function
\[
g(a, b, c, d) =
\begin{cases}
-a^T \mean{c}{H(\omega; b)} a & c \in B \\
\pr{2/d}\kl{c}{\theta^{*}} & c \notin B
\end{cases}.
\]
This function is continuous in $\pr{\real^M, \theta^{*}, \theta^{*}, 0}$.
Hence, we can apply continuous map theory~\cite[Theorem 2.3]{vaart98statistics}
to obtain that
\[
	2N\kl{\theta_N}{\theta^{*}} = g\pr{\sqrt{N}\pr{\theta_N - \theta^{*}}, \eta_N, \theta_N, \frac{1}{N}}
	\leadsto -X^T\mean{\theta^{*}}{H\pr{\omega; \theta^{*}}}X,
\]
where $X$ is a random variable distributed as $N(0, \Sigma)$. We know that
$\mean{\theta^{*}}{H\pr{\omega; \theta^{*}}} = -\Sigma^{-1}$~\cite[Lemma
4.11]{kullback68information}. Theorem follows since $X^T\Sigma^{-1}X$ is
distributed as $\chi^2$ with $M$ degrees of freedom~\cite[Lemma
17.1]{vaart98statistics}.

\end{document}